\documentclass[11pt]{article}

\RequirePackage{amsthm,amsmath,amsfonts,amssymb,bbm}
\RequirePackage{mathtools}
\RequirePackage{microtype}
\RequirePackage{graphicx}
\RequirePackage{subcaption}
\RequirePackage{booktabs}
\RequirePackage{array}
\RequirePackage{comment}
\RequirePackage{float}
\RequirePackage[numbers]{natbib}
\RequirePackage[colorlinks,citecolor=blue,urlcolor=blue]{hyperref}
\RequirePackage[capitalize,noabbrev]{cleveref}

\numberwithin{equation}{section}

\theoremstyle{plain}

\newtheorem{theorem}{Theorem}[section]

\theoremstyle{definition}

\newtheorem{assumption}[theorem]{Assumption}

\theoremstyle{remark}

\title{Conformalized Percentile Interval: Finite Sample Validity and Improved Conditional Performance}
\author{
Ran Zou\thanks{Department of Statistics, University of California, Irvine}
\and
Wanrong Zhu\footnotemark[1]
\and
Bin Nan\footnotemark[1]
}
\date{}

\begin{document} 
	\maketitle

\begin{abstract}
 Conformal prediction provides distribution-free predictive intervals with finite-sample marginal coverage. However, achieving conditional validity and interval efficiency (in terms of short interval length) remains challenging, particularly in complex settings with heteroskedasticity, skewed responses, or estimation errors. We propose a method that leverages the conditional cumulative distribution function (CDF) estimated via neural networks for a conformal-style calibration on responses transformed via the probability integral transform (PIT) to construct a finite-sample-adjusted percentile interval with the shortest length determined by the estimated conditional CDF. Calibrating in PIT space is effective because PIT values are asymptotically feature-independent when the CDF estimator is accurate, which mitigates feature-dependent miscoverage and improves conditional calibration. On the other hand, our percentile calibration adapts to the empirical PIT distribution, which is robust against a possibly imperfect estimation of the conditional CDF.
 We prove the finite-sample marginal coverage property of the proposed method and show its asymptotic conditional coverage under mild consistency conditions. Experiments on diverse synthetic and real-world benchmarks demonstrate better conditional calibration and substantially shorter intervals than existing methods.
\end{abstract}

\section{Introduction}
\label{sec:intro}
Reliable uncertainty quantification (UQ) is increasingly important as machine learning systems are deployed in
high-stakes settings. While many classical UQ methods provide asymptotic guarantees or rely on strong distributional assumptions \citep{koenker2001quantile, ruppert2003semiparametric}, real-world applications demand finite-sample validity that remains reliable under complex data and model misspecification. 
Conformal prediction provides a compelling solution: it can be wrapped around any predictive model (including modern deep networks and ensemble methods) and, without assumptions beyond data exchangeability, yields prediction sets with finite-sample marginal coverage
\citep{vovk2005algorithmic,lei2018distributionfree}. Specifically, given any pre-trained predictive model and a calibration dataset $\{(X_{1}, Y_{1}), \dots, (X_{n}, Y_{n})\}$, where $X_i$'s are covariates and $Y_i$'s are corresponding responses, conformal prediction learns a prediction-set function $\widehat{C}(\cdot)$ with a user-specified target coverage level $1-\alpha$, where $\alpha$ is typically a small value, e.g., $\alpha=0.1$. For a new test point with covariate $X_{n+1}$, it constructs a prediction set $\widehat{C}(X_{n+1})$ for the corresponding response $Y_{n+1}$ such that 
\[\mathbb{P}(Y_{n+1}\in\widehat{C}(X_{n+1}))\ge 1-\alpha.\]

The marginal coverage guarantee provided by standard conformal prediction holds only on average over all possible test points. It may be insufficient for many high-stakes applications that require stronger reliability guarantees conditioned on specific test instances or subpopulations.  For such applications, one may seek prediction sets that satisfy \emph{test-conditional} coverage:
\[\mathbb{P}(Y_{n+1}\in\widehat{C}(X_{n+1})\mid X_{n+1})\ge 1-\alpha.\] 
However, it has been shown that achieving exact test-conditional coverage while maintaining informative prediction sets is impossible in a fully distribution-free setting \citep{foygel2021limits}.  For practical purposes, one may instead relax this requirement to hold in an asymptotic sense and aim to improve conditional performance as much as possible. This forms the central objective of this work.
 
Achieving test-conditional coverage is substantially more challenging than marginal coverage. For example, standard residual-based conformal intervals rely on a global residual distribution, which can lead to feature-dependent miscoverage when the residual law varies across the feature space.  
To mitigate this, several methods have focused on designing adaptive conformity scores. A simple approach is to use variance-rescaled residual scores \citep{lei2018distributionfree}, which adjust for heteroskedasticity by normalizing residuals with an estimate of local variability. Going beyond scale adjustment, Conformalized Quantile Regression (CQR) \citep{romano2019cqr} incorporates conditional quantile information to better capture heterogeneity in the response. To further incorporate richer distributional information, several distribution-aware methods have been proposed, including conditional histogram regression \citep{sesia2021conformal} and Distributional Conformal Prediction (DCP) \citep{chernozhukov2021dcp}. Among these, DCP is most closely related to our work, as it also leverages conditional CDF estimation and performs calibration in the Probability Integral Transform (PIT) space.
If the conditional CDF is estimated accurately, then the transformed variable should lie on the unit interval and be approximately uniformly distributed, with reduced dependence on the covariates. However, existing CDF-based approaches typically rely on a single conformity score in the PIT space, which may impose structural constraints (e.g., symmetry) and limit their ability to adapt to asymmetric errors.

Building on these ideas, we propose the Conformalized Percentile Interval (CPI). CPI performs endpoint calibration rather than score calibration, directly adjusting the lower and upper percentile bounds independently without any symmetry restriction. 
Specifically, using PIT values $U_i = \widehat F(Y_i \mid X_i)$ computed on a hold-out calibration set, we select lower and upper PIT cutoffs, $u_{lo}$ and $u_{hi}$, as independent order statistics of the calibration PIT values. For a new feature vector $X_{n+1}$, the adjusted percentile interval is constructed as $$[\widehat F^{-1}(u_{lo} \mid X_{n+1}), \widehat F^{-1}(u_{hi} \mid X_{n+1})].$$ 
This approach adapts to the local distributional shape while retaining finite-sample marginal validity. In principle, our framework is flexible and can be paired with any conditional CDF estimator. In this work, we adopt the neural network conditional distribution estimator of \citet{hu2023nncde} for our experiments to demonstrate the practical performance of our framework and provide a concrete, specific implementation.



\textbf{Our main contributions are:}
\begin{itemize}
    \item \textbf{A Novel PIT-Based Framework:} We move beyond classic conformity scores to propose an adjusted percentile interval that directly calibrates percentile endpoints. The method is straightforward to implement, provides finite-sample marginal guarantees, and is compatible with any conditional CDF estimator.
        \item \textbf{Improved Conditional Coverage and Shorter Intervals:} By leveraging conditional distribution estimation, CPI better captures local distributional shape, leading to improved and more stable conditional coverage. This improved alignment with the conditional distribution also results in shorter prediction intervals, particularly in settings with heteroskedasticity or asymmetric noise.
    \item \textbf{Robustness to Asymmetric Estimation Errors:} Unlike existing CDF-based methods, which rely on a single conformity score in the PIT space and often impose symmetry, CPI independently calibrates upper and lower PIT cutoffs. This flexibility makes the framework robust to asymmetric estimation errors---common in cases of model misspecification or distribution shift---resulting in substantially shorter intervals without sacrificing coverage.
\end{itemize}


 
\subsection{Related Work}
A separate line of work improves local adaptivity by modifying the calibration procedure itself, for example by reweighting calibration points using neighborhood information around the test point \citep{tibshirani2019covariateshift, guan2023localized, hore2023conformal,
plassier2025rectifying,PlassierProb25,  gibbs2025conformal}. These approaches are also useful for improving subgroup-conditional performance or handling covariate shift. 
Efficiency in terms of interval width is another important aspect of prediction sets. A better fitted model typically leads to smaller prediction errors and thus narrower intervals \citep{burnaev2014efficiency, stutz2021learning}. Efficiency is also closely related to conditional coverage: adaptive conformity scores, such as those in \citet{lei2018distributionfree, romano2019cqr}, generally produce smaller prediction sets by accounting for heterogeneity in the data. In addition, ensemble strategies that improve either the predictive model or the conformity score during pretraining or calibration have been shown to further enhance efficiency \citep{xie2024boosted, yang2025selection, liang2024conformal}. Finally, the Predictability, Computability, and Stability (PCS) framework for uncertainty quantification \citep{yu2013stability, pcsUQ2025} combines model selection or augmentation with bootstrap-style perturbations to achieve strong empirical efficiency.
These approaches are largely orthogonal to our framework and can potentially be combined with CPI to further improve conditional performance and efficiency.
\section{Conformalized Percentile Interval (CPI)}
\label{sec:method}
We consider the same data-splitting setup as in split conformal prediction, which consists of three independent parts: a training set $\mathcal D_{\mathrm{tr}}$ for CDF estimation; a calibration set with $n$ observations
$
\mathcal D_{\mathrm{cal}}=\{(X_j,Y_j)\}_{j=1}^{n}
$
for finite-sample correction; and a test point with covariate $X_{n+1}$. Our goal is to construct a prediction set $\widehat{C}(X_{n+1})$ for the response $Y_{n+1}$ with a target coverage level of $1-\alpha$, $\alpha\in (0, 1)$.

Let $F(\cdot \mid x)$ denote the true conditional CDF of $Y$ given $X = x$. A percentile interval of the form $$\big[F^{-1}(z \mid x), F^{-1}(z+1-\alpha \mid x)\big]$$ achieves the target coverage level $1-\alpha$ for any starting point $z \in [0,\alpha]$, provided the conditional CDF is known. In this section, we construct a conformalized percentile interval by first forming an adjusted interval $[u_{\mathrm{lo}}, u_{\mathrm{hi}}]$ in the PIT space—serving as a finite-sample correction to the nominal interval $[z, z+1-\alpha]$—using an estimated CDF and the calibration set. Finally, we map this adjusted interval back to the original response scale.

As mentioned earlier, the training step can be replaced by any pre-trained conditional distribution estimator. In the experiment, we focus specifically on the neural network conditional distribution estimator proposed by \citet{hu2023nncde}, which we refer to as the NN-CDE model, and train it on $\mathcal D_{\mathrm{tr}}$. We show that it performs well within our procedure, making the overall approach practical and easy to implement in real-world settings.  
We use $\widehat F(\cdot \mid x)$ to denote the resulting estimate of the true conditional CDF $F(\cdot \mid x)$, and $\widehat Q(u \mid x) = \widehat F^{-1}(u \mid x)$ to denote the corresponding conditional quantile function for $u \in (0,1)$.

\subsection{General Framework}
\label{sec:CPI_base}
In general, the starting point $z$ for the nominal interval $[z, z+1-\alpha]$ can be an arbitrary function, provided it is constructed independently of the calibration set.
For example, $z$ could be a fixed constant, e.g. $z = \alpha/2$, or a pre-trained function $z = z(x)$ derived solely from the training data that depends on the test covariate $x$.   We will discuss the length-optimal choice of $z$ later.
The calibration procedure is detailed below and illustrated in Figure \ref{fig:cpi-illustration}.

\begin{figure}[t]
  \centering  \includegraphics[width=\columnwidth]{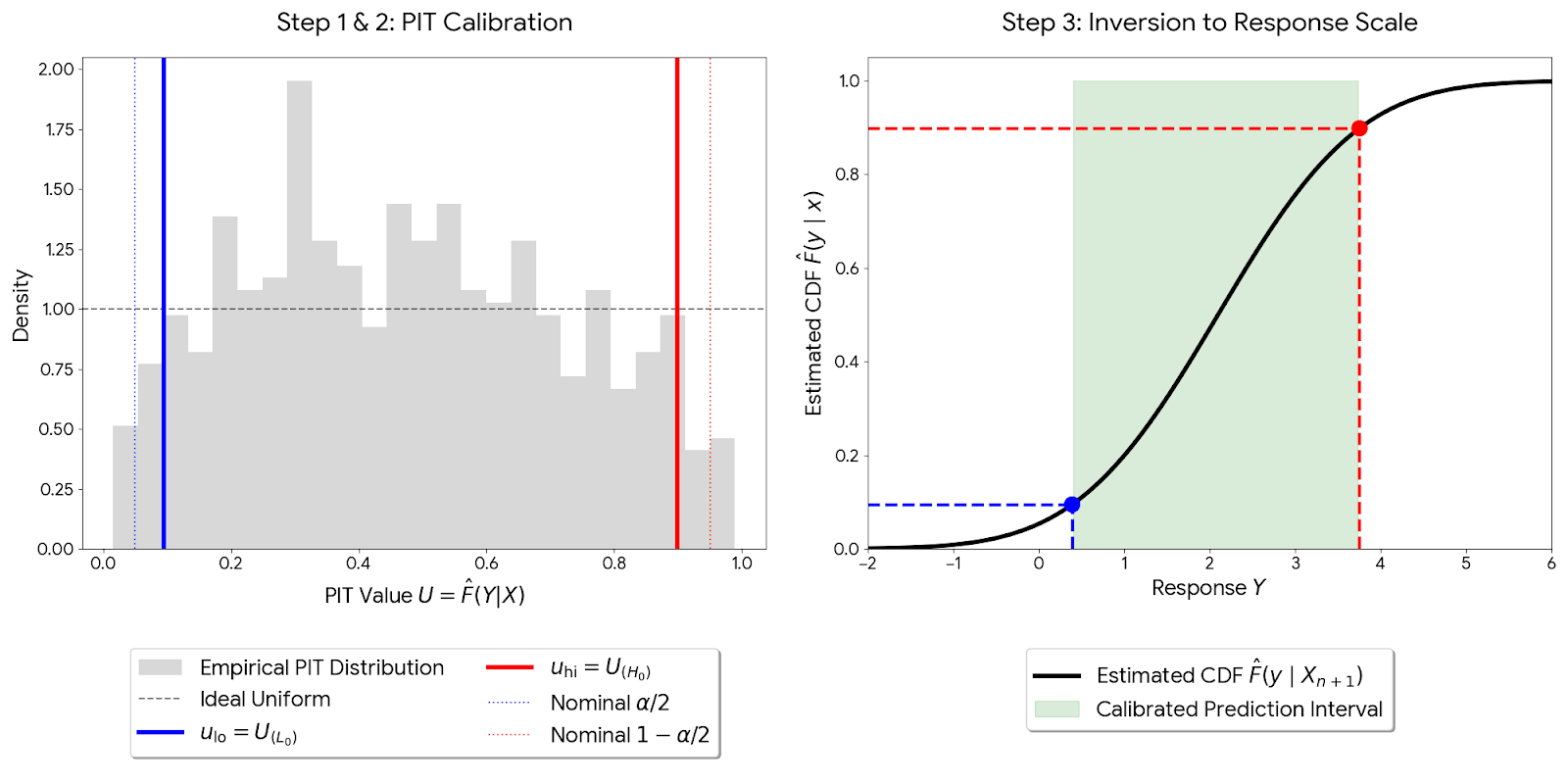}
  \caption{Illustration of CPI with $z = \alpha/2$. Given calibration data and a pre-trained 
conditional CDF estimator $\widehat{F}(\cdot \mid x)$, CPI (i) computes the 
calibration PIT values $U_j = \widehat{F}(Y_j \mid X_j)$, (ii) selects lower 
and upper PIT cutoffs $u_{\mathrm{lo}}, u_{\mathrm{hi}}$ independently as 
order statistics of $\{U_j\}$, and (iii) outputs the prediction interval 
$\big[\widehat{Q}(u_{\mathrm{lo}} \mid X_{n+1}),\, 
\widehat{Q}(u_{\mathrm{hi}} \mid X_{n+1})\big]$ by inverting $\widehat{F}$ 
at the calibrated cutoffs.}
  \label{fig:cpi-illustration}
\end{figure}

\emph{Step 1: Compute PIT values.} \\On the calibration set $\mathcal D_{\mathrm{cal}}$, compute the PIT values
\[
U_j=\widehat F(Y_j\mid X_j),\qquad j=1,\dots,n,
\]
and let $ U_{(1)}\le\cdots\le  U_{(n)}$ denote the corresponding order statistics (with randomized tie-breaking when necessary).

If the estimated CDF were equal to the true conditional CDF, then the PIT values $\widehat F(Y_{n+1} \mid X_{n+1})$ (as well as $U_j$) would follow a $\mathrm{Uniform}(0,1)$ distribution. In that case, the PIT interval $[z, z+ 1-\alpha]$ would be valid, and mapping it back via 
$\Big[
\widehat Q(z\mid x),\;
\widehat Q(z+1-\alpha\mid x)
\Big]$ would yield a percentile interval with conditional coverage $1-\alpha$. However, because $\widehat F$ is estimated, the PIT values are not exactly uniform. Following the idea of conformal prediction \citep{vovk2005algorithmic}, we therefore use the calibration set to better approximate the distribution of the PIT values.  

\emph{Step 2: Select lower and upper PIT cutoffs.}\\
Define the rank indices
\begin{align*}
L(z) &:= \big\lfloor z \,(n+1)\big\rfloor+1,\\
H(z) &:= \big\lceil (z +1-\alpha)\,(n+1)\big\rceil,
\end{align*}
and set the following PIT cutoff values from $\{U_{(1)}, \dots, U_{(n)}\}$:
\[
u_{\mathrm{lo}}:= U_{(L(z))},
\quad
u_{\mathrm{hi}}:= U_{(H(z))}.
\]
Then the adjusted PIT interval is $[u_{\mathrm{lo}}, u_{\mathrm{hi}}]$, which serves as a finite-sample correction to $[z, z+ 1-\alpha]$ from the empirical distribution of the PIT values.

\emph{Step 3: Invert the conditional CDF to form a prediction interval.}\\
The final prediction set for the response is a percentile interval based on the calibrated PIT cutoffs obtained in \emph{Step 2}. Specifically, given a new covariate value $X_{n+1}$, we output the prediction interval 
\[
\widehat C(X_{n+1})
=
\Big[
\widehat Q(u_{\mathrm{lo}}\mid X_{n+1}),\;
\widehat Q(u_{\mathrm{hi}}\mid X_{n+1})
\Big].
\]
Thus, the resulting interval can be interpreted as a percentile interval with a finite-sample adjustment to the nominal interval $\big[F^{-1}(z \mid x), F^{-1}(z+1-\alpha \mid x)\big]$. This adjustment corrects for the  estimation error by calibrating the estimated conditional CDF in PIT space, resulting in prediction intervals with reliable finite-sample marginal coverage.

 
\subsection{Length-optimal  Choice of $z$ }
\label{sec:z}
 CPI can be viewed as a percentile interval with a finite-sample adjustment applied to the nominal interval $\big[F^{-1}(z \mid x), F^{-1}(z+1-\alpha \mid x)\big]$. While fixing $z = \alpha/2$ serves as a solid basic method, from an efficiency perspective, we prefer intervals that capture the regions where the conditional distribution is most concentrated.  To achieve this, we now define a feature-dependent starting point, $z(x)$, explicitly designed to yield a more efficient---that is, the shortest possible---predictive interval.

Intuitively, this corresponds to selecting a PIT interval that aligns with the highest-density regions upon mapping back to the response scale. Ideally, we want to select the theoretical optimal starting point $z^\star(x)$ that minimizes the true interval length:$$z^\star(x) \in \arg\min_{z\in[0,\alpha]} \Big\{ F^{-1}\big(z+1-\alpha \mid x\big) - F^{-1}\big(z\mid x\big) \Big\}.$$Since the true conditional quantile function $F^{-1}$ is unknown in practice, we approximate this objective using our estimated conditional quantile function $\widehat{Q}$ (the inverse of our estimated CDF).  

This motivates modifying the calibration step by allowing the lower PIT cutoff to adapt to the test covariate $x$ rather than remaining fixed.  We propose two distinct approaches to handle this, each offering a different trade-off between offline training cost and online inference speed.

\emph{Approach 1 (Per-Test-Point Direct Optimization):} One can solve the minimization problem for $\hat{z}(x)$ on the fly for every incoming test point $x$:$$\hat{z}(x) \in \arg\min_{z \in [0, \alpha]} \left\{ \widehat{Q}(z + 1 - \alpha \mid x) - \widehat{Q}(z \mid x) \right\},$$
via a grid search over $[0, \alpha]$.
Then, when constructing the prediction interval for a new test covariate $X_{n+1}$, we set the starting point in \emph{Step 2} to $z = \hat{z}(X_{n+1})$.

\emph{Approach 2 (Amortized Optimization via a Learned Model):}  
During the training phase, we first compute the empirical optimal points $\hat{z}(x)$ as described in Approach 1 for all training covariates. Next, we fit a secondary neural network $z_\theta(x)$, using the training set $\mathcal{D}_{\mathrm{tr}}$, to predict $\hat z(x)$. To strictly enforce the boundary constraint $z_\theta(x) \in (0, \alpha)$, we parameterize the network output as:$$z_\theta(x) = \alpha \cdot \sigma\!\big(g_\theta(x)\big), \qquad \sigma(t) = \frac{1}{1+e^{-t}},$$where $g_\theta(x) \in \mathbb{R}$ is an unconstrained neural network. We train $g_\theta$ by minimizing the mean squared error over the training covariates:$$\min_\theta \; \frac{1}{|\mathcal{D}_{\mathrm{tr}}|} \sum_{(x,\cdot) \in \mathcal{D}_{\mathrm{tr}}} \Big(z_\theta(x) - \hat{z}(x)\Big)^2.$$Once training is complete, $z_\theta(\cdot)$ is frozen and treated as part of the base model. During calibration and testing, determining the optimal $z$ requires only a single, highly efficient forward pass. When constructing the prediction interval for a new test covariate $X_{n+1}$, we set the starting point in \emph{Step 2} to $z = z_\theta(X_{n+1})$.

Both approaches remain valid within the conformal framework because $\hat{z}(x)$ or $z_{\theta}(x)$ depend strictly on the training set and the test covariate $x$; they remain unconditionally independent of the calibration set.  Approach 1 may be preferred for its pipeline simplicity, as it circumvents the need to train and tune an additional model for $z_\theta$ and avoids an additional source of approximation error. However, it requires solving a grid-search minimization problem for every single test point. Consequently, for applications with extremely large test sets or strict latency requirements, Approach 2 becomes computationally superior by amortizing the heavy optimization cost into a single offline training phase. Users may choose between these two implementations based on their specific deployment constraints.

\section{Connection and comparison to DCP}
The same CDF estimator (e.g., an NN-CDE model) and the length-optimization strategy described in Section \ref{sec:z} can also be integrated into the DCP framework \cite{chernozhukov2021dcp}.
 While both CPI and DCP utilize an estimated conditional CDF to map responses into PIT space---thereby improving conditional coverage compared to non-CDF-based methods---the distinction between the two lies in how the PIT interval is calibrated (\emph{Step 2}).
 
 CPI calibrates the lower and upper PIT cutoffs \emph{independently} as order statistics of the calibration PITs. In contrast, DCP is a traditional conformal inference method that relies on a conformity score $S_j = |U_j - c|$, where $c$ is a fixed center of the PIT interval ($c = 0.5$ for basic DCP and $c = z+(1-\alpha)/2$ in the general case). DCP then forms the PIT cutoffs$$u_{\mathrm{lo}} = \max(0, c-\hat{q}), \qquad u_{\mathrm{hi}} = \min(1, c+\hat{q}),$$where $\hat{q}$ denotes the $(1-\alpha)$-quantile of $\{S_j\}_{j=1}^n$. The resulting PIT interval is \emph{symmetric} around the center $c$.
 
 In practice, the PIT distribution is often non-uniform and asymmetric due to finite-sample estimation error, model misspecification, or distribution shift. For instance, if the estimated CDF consistently overestimates the true CDF, the PIT values shift toward 1. The symmetry constraint causes DCP to produce unnecessarily wide intervals. The DCP center $c$ is confined to $[(1-\alpha)/2, \alpha + (1-\alpha)/2]$ since $z$ ranges over $[0, \alpha]$ (e.g., $[0.45, 0.55]$ for $\alpha = 0.1$). Even after optimizing $z$, the DCP center barely moves from 0.5. For a right-skewed PIT distribution whose mass is concentrated well above 0.5, this center is fundamentally misaligned, and the symmetric interval must extend far into low-density regions to achieve the required coverage. CPI removes this bottleneck; because its cutoffs are independent order statistics, the interval adapts naturally to the actual PIT distribution.

\begin{figure}[t]
  \centering
\includegraphics[width=\columnwidth]{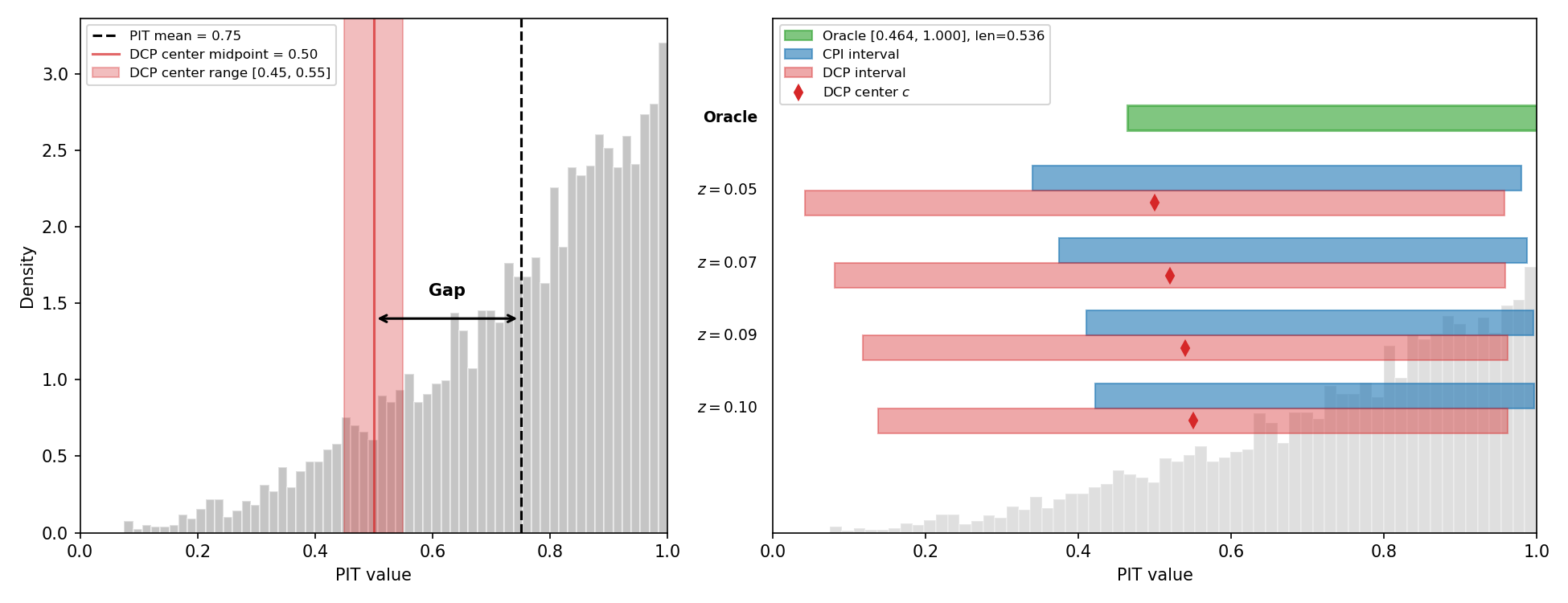}
  \caption{Left: DCP center range $[0.45, 0.55]$ (red shading) versus the mean of $\mathrm{Beta}(3,1)$. Right: the oracle shortest $90\%$ interval (green), together with CPI and DCP interval endpoints from a single calibration draw at four starting points $z$.}
  \label{fig:cutoff_beta31}
\end{figure}

\begin{table}[t]
\centering
\caption{Averaged coverage and PIT interval length for CPI and DCP on $\mathrm{Beta}(3,1)$. The oracle shortest 90\% interval has length 0.536. Length reduction is relative to DCP.}
\label{tab:avg_results}
\begin{tabular}{c cc ccc }
\toprule
& \multicolumn{2}{c}{Coverage} & \multicolumn{3}{c}{Interval length}  \\
\cmidrule(lr){2-3} \cmidrule(lr){4-6}
$z$ & CPI & DCP  & CPI & DCP & Red. \\
\midrule
0.05 & .891\,  & .896\,    & .618\,(.038) & .931\,(.015) & 33.6\% \\
0.06 & .891\,  & .896\,    & .598\,(.037) & .911\,(.015) & 34.3\% \\
0.07 & .893\,  & .896\,    & .581\,(.036) & .891\,(.015) & 34.8\% \\
0.08 & .895\,  & .896\,    & .565\,(.034) & .872\,(.015) & 35.2\% \\
0.09 & .896\,  & .896\,   & .551\,(.034) & .852\,(.015) & 35.4\% \\
0.10 & .892\,  & .896\,    & .536\,(.033) & .833\,(.015) & 35.6\% \\
\bottomrule
\end{tabular}
\end{table}
\textbf{Illustration in PIT space.}
 To illustrate this, consider a right-skewed PIT distribution such as $\mathrm{Beta}(3,1)$. The oracle shortest 90\% interval is $[0.464, 1.000]$ with length $0.536$, concentrating entirely in the high-density region. As shown in Figure~\ref{fig:cutoff_beta31}, CPI's intervals closely track this oracle, while DCP's restricted center forces its lower bound to remain artificially low, preventing it from efficiently capturing the high-density region.
 We quantify this advantage via a simulation with $n_{\mathrm{cal}} = 200$, $\alpha = 0.1$, and 5000 independent replications. Table~\ref{tab:avg_results} reports the averaged coverage and PIT interval length for CPI and DCP on $\mathrm{Beta}(3,1)$ at starting points $z = 0.05, 0.06, \ldots, 0.10$. Both methods maintain the target $90\%$ coverage, but CPI achieves $34\%-36\%$ shorter intervals than DCP across all choices of $z$, and approaches the oracle length 0.536 as $z$ increases.
 
 The shorter PIT intervals (particularly in cases where the DCP interval almost entirely contains the CPI interval) translate to shorter prediction intervals on the response scale. This explains why the CPI intervals are narrower than those of DCP in practice. We further verify in the numerical experiments that this advantage persists when mapping back to the original response ($Y$) space.

\section{Theoretical Guarantees}
\label{sec:theory}
We begin by proving a finite-sample marginal coverage guarantee for the proposed CPI methods, including both the basic approach and the length-optimal version. We then strengthen this result by establishing 
the asymptotic guarantee for conditional coverage.
\begin{theorem}[Finite-Sample Marginal Guarantee]
\label{thm:finite_123}
 If $(X_i, Y_i), i = 1, \dots, n+1$ are exchangeable, then the prediction interval $\widehat C(X_{n+1})$ constructed by the CPI method, as well as its length-optimal version, satisfies  
\[
\mathbb P\!\left(Y_{\mathrm{n+1}}\in \widehat C(X_{\mathrm{n+1}})\ \right)\ \ge\ 1-\alpha.
\]
\end{theorem}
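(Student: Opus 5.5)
The plan is to reduce coverage on the response scale to a rank statement about PIT values, and then use exchangeability of the $n+1$ PIT values. Write $U_{n+1}=\widehat F(Y_{n+1}\mid X_{n+1})$ and let $\widehat Q(u\mid x)=\inf\{y:\widehat F(y\mid x)\ge u\}$.

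\textbf{Step 1: reduce coverage to an event in PIT space.} Since $\widehat F(\cdot\mid x)$ is right-continuous, $\widehat F(y\mid x)\ge u$ holds if and only if $y\ge \widehat Q(u\mid x)$. Hence $U_{n+1}\ge u_{\mathrm{lo}}$ implies $Y_{n+1}\ge \widehat Q(u_{\mathrm{lo}}\mid X_{n+1})$. For the upper endpoint, $U_{n+1}<u_{\mathrm{hi}}$ implies $Y_{n+1}<\widehat Q(u_{\mathrm{hi}}\mid X_{n+1})$. The boundary case $U_{n+1}=u_{\mathrm{hi}}$ I would handle by assuming $\widehat F(\cdot\mid x)$ is continuous and strictly increasing on its support, which NN-CDE-type estimators satisfy; otherwise I would use the randomized tie-breaking of Step 1 of the method. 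With this, the coverage event contains $\{u_{\mathrm{lo}}\le U_{n+1}\le u_{\mathrm{hi}}\}$, so it suffices to lower-bound the probability of that event.

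\textbf{Step 2: fixed $z$ (basic CPI).} The map $(x,y)\mapsto \widehat F(y\mid x)$ depends only on $\mathcal D_{\mathrm{tr}}$. Therefore $U_1,\dots,U_{n+1}$ are exchangeable conditionally on $\mathcal D_{\mathrm{tr}}$. With random tie-breaking, the rank $R$ of $U_{n+1}$ among all $n+1$ values is uniform on $\{1,\dots,n+1\}$. The two failure events translate into rank events:
\begin{itemize}
\item $U_{n+1}<U_{(L)}$ forces $R\le L$;
\item $U_{n+1}>U_{(H)}$ forces $R\ge H+1$.
\end{itemize}
Summing gives
\[
\mathbb P(\text{miss})\le \frac{L+(n+1-H)}{n+1},
\]
with $L=\lfloor z(n+1)\rfloor+1$ and $H=\lceil (z+1-\alpha)(n+1)\rceil$.

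\textbf{Main obstacle: the off-by-one in this count.} Naively, the bound above gives $\alpha+1/(n+1)$ rather than $\alpha$. I would first try to sharpen the lower failure event. Under continuity, a miss below requires at most $L-1$ calibration values strictly below $U_{n+1}$, and I would check whether this yields $R\le L-1$ after accounting for the tie-breaking convention. Then $L-1=\lfloor z(n+1)\rfloor\le z(n+1)$ and $n+1-H\le (\alpha-z)(n+1)$, which sum to at most $\alpha(n+1)$. If that bookkeeping does not close, I would check whether the intended lower index should be $\lfloor z(n+1)\rfloor$, that is, with the event counted using $\ge$. I would flag this as the step most likely to need a convention adjustment.

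\textbf{Step 3: covariate-dependent $z$ (length-optimal version).} Here $z=\hat z(X_{n+1})$ or $z_\theta(X_{n+1})$, so $L$ and $H$ depend on the test point, and $R$ is no longer independent of them. Instead of conditioning, I would use a symmetrization argument. Let $z_i=z(X_i)$, which is a fixed measurable function given $\mathcal D_{\mathrm{tr}}$. Let $R_i$ be the rank of $U_i$ among all $n+1$ values. Define
\[
M_i=\{R_i\le L(z_i)-1\}\cup\{R_i\ge H(z_i)+1\}.
\]
By exchangeability, the test-point miss event coincides with $M_{n+1}$ evaluated on the full sample. Hence
\[
\mathbb P(M_{n+1})=\frac{1}{n+1}\,\mathbb E\,\#\{i:M_i\}.
\]
It then remains to bound $\#\{i:M_i\}\le \alpha(n+1)$ deterministically. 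For each $i$, the allowed rank window has at most $\lfloor z_i(n+1)\rfloor+(n+1)-\lceil(z_i+1-\alpha)(n+1)\rceil\le \alpha(n+1)$ excluded positions. These windows differ across $i$, however, so a plain pigeonhole argument does not immediately apply. The fallback is to condition on the unordered multiset of pairs $\{(X_i,Y_i)\}$, under which the test index is uniform over the $n+1$ positions. Then for each realization the miss probability is the average over $i$ of $\mathbf 1\{M_i\}$. I would bound this average by $\alpha$ using that each $i$ misses only if its rank lies in a set of size at most $\alpha(n+1)$, while the ranks $R_i$ form a permutation, and possibly by re-randomizing the assignment of the $z_i$.
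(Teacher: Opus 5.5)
\textbf{The off-by-one is real and cannot be closed with the stated index.} Your Step 1 and your rank counting in Step 2 are correct. With no ties, however, $\{U_{n+1}<U_{(L)}\}=\{R_{n+1}\le L\}$ and $\{U_{n+1}>U_{(H)}\}=\{R_{n+1}\ge H+1\}$ are exact identities. So for $H\le n$ the coverage is exactly $(H-L)/(n+1)$, and the sharpening to $R_{n+1}\le L-1$ you hope for is impossible. With $L=\lfloor z(n+1)\rfloor+1$ this falls below $1-\alpha$. Take any continuous setting with $\widehat F=F$, $n=200$, $\alpha=0.1$, $z=\alpha/2$: then $L=11$, $H=191$, and coverage is $180/201\approx 0.8955$. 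This is consistent with the CPI coverages 0.891--0.896 the paper reports in its $\mathrm{Beta}(3,1)$ experiment. So the statement as written is false, and the paper's proof has no missing idea you could borrow here. It asserts $\{U_{(L)}\le U_{n+1}\le U_{(H)}\}=\{L\le R_{n+1}\le H\}$, but since the $U_{(k)}$ are order statistics of the $n$ calibration values only, the correct event is $\{L+1\le R_{n+1}\le H\}$. Your fallback is the right repair: use lower index $\lfloor z(n+1)\rfloor$ with $U_{(0)}:=-\infty$, and set $U_{(n+1)}:=+\infty$, since $H=n+1$ whenever $z>\alpha-1/(n+1)$. With that change, plus your continuity assumption for the boundary case, your Step 2 is a complete proof for constant $z$. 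With the paper's index, the provable bound is only $1-\alpha-1/(n+1)$.

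\textbf{Step 3 cannot be completed either.} The deterministic bound $\#\{i:M_i\}\le\alpha(n+1)$ is false. No argument that keeps $z_i=z(X_i)$ attached to $X_i$ can produce it: the lowest ranks may all belong to points with $z_i\approx\alpha$ (which exclude low ranks), and the highest to points with $z_i\approx 0$ (which exclude high ranks). Here is a concrete example. Let $X$ be uniform on $\{a,b\}$, and on $[0,1]$ set $\widehat F(y\mid a)=y^2$ and $\widehat F(y\mid b)=1-(1-y)^2$. The estimated lengths are $\sqrt{z+1-\alpha}-\sqrt z$ (decreasing in $z$) and $\sqrt{1-z}-\sqrt{\alpha-z}$ (increasing in $z$), so $\hat z(a)\approx\alpha$ and $\hat z(b)\approx 0$. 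Let $Y\mid X=a\sim\mathrm{Unif}(0,0.1)$ and $Y\mid X=b\sim\mathrm{Unif}(0.9,1)$. Then every $a$-PIT lies below $0.01$ and every $b$-PIT above $0.99$. Given $X_{n+1}=a$, the test PIT is uniformly ranked within roughly the bottom half of the $n+1$ ranks, so it falls below $u_{\mathrm{lo}}$ with probability about $2\alpha$. The case $X_{n+1}=b$ is symmetric, so coverage is about $1-2\alpha$ for large $n$.

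The paper gets past this point only by claiming that $R_{n+1}$ is uniform conditionally on $X_{n+1}$. Exchangeability of $U_1,\dots,U_{n+1}$ gives only unconditional uniformity. The conditional claim fails in general once the PIT law varies with $x$; it holds, for example, when $\widehat F=F$. Your symmetrization is the right tool, but what it actually proves is weaker. Every excluded rank lies among the bottom $\lfloor\alpha(n+1)\rfloor+1$ or the top $n+1-\lceil(1-\alpha)(n+1)\rceil$ positions, so miscoverage is at most $2\alpha+1/(n+1)$, and the example shows this is essentially tight. A genuine $1-\alpha$ guarantee with covariate-dependent $z$ requires a different calibration. One option is a single conformity score $s(X_i,U_i)$ whose sublevel sets are nested, $x$-dependent PIT intervals.
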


\begin{proof} 
Recall that $U_j=\widehat F(Y_j\mid X_j)$, $j=1,\dots,n$, denote the PIT values  for the calibration data. Similarly, let $$U_{n+1} = \widehat F(Y_{n+1}\mid X_{n+1}),$$ 
and let $R_{n+1}$ denote the rank of $U_{n+1}$ among $U_1, \dots, U_n, U_{n+1}$. 

We provide the proof for an arbitrary function $z(x)$, provided it is strictly independent of the calibration set. This includes, for example, the fixed constant $z = \alpha/2$, or a model  trained exclusively on the training set, as in our length-optimal version. 
Therefore the functions $L(\cdot)$ and $H(\cdot)$ defined in \emph{Step 2}  do not depend on the calibration data. 

Given $X_{n+1}$, the quantities $L(X_{n+1})$ and $H(X_{n+1})$ are fixed, and moreover are independent of the ordering of the PIT values $U_1,\dots,U_{n+1}$. Since the data points $(X_i, Y_i)$ for $i = 1, \dots, n+1$ are exchangeable, and $\widehat{F}$ is trained on a separate dataset (making it independent of these points), it follows that $U_1,\dots,U_{n+1}$ are also exchangeable. Consequently, the rank $R_{n+1}$ is uniformly distributed, even after conditioning on $X_{n+1}$.
Therefore,
\[
\begin{split}
\mathbb{P}\big(L(X_{n+1}) \le R_{n+1} \le H(X_{n+1}) \mid X_{n+1}\big)\\= \frac{H(X_{n+1}) - L(X_{n+1}) + 1}{n+1}.  
\end{split}\]
Taking expectation over $X_{n+1}$, we obtain
\[
\begin{split}
    \mathbb{P}\big(L(X_{n+1}) \le R_{n+1} \le H(X_{n+1})\big)\\= \mathbb{E}\left[\frac{H(X_{n+1}) - L(X_{n+1}) + 1}{n+1}\right].
\end{split}\]
From the CPI construction in \emph{Step 2}, the indices $L(X_{n+1})$ and $H(X_{n+1})$ are chosen such that
$$H(X_{n+1}) - L(X_{n+1}) +1 \ge (1-\alpha)(n+1),$$
for all $X_{n+1}$. Hence,
\[
\begin{split}
    \mathbb P( L(X_{n+1})\le R_{n+1}\le H(X_{n+1})) \ge\ 1-\alpha.
\end{split}
\]
By definition of rank, the event
\[\{ U_{(L(X_{n+1}))}\le  U_{n+1}\le U_{(H(X_{n+1}))}\}\]
is equivalent to
\[\{L(X_{n+1})\le R_{n+1}\le H(X_{n+1})\}.\] 
This implies
\[
\begin{split}
    \mathbb P( U_{(L(X_{n+1}))}\le  U_{n+1}\le  U_{(H(X_{n+1}))} ) \\
    = \mathbb P( L(X_{n+1})\le R_{n+1}\le H(X_{n+1}))\ge\ 1-\alpha.
\end{split}
\]
Since $\widehat F(\cdot\mid x)$ is nondecreasing and $\widehat Q(\cdot\mid x)$ is its generalized inverse, the event $$\{U_{(L(X_{n+1}))}\le  U_{n+1}\le  U_{(H(X_{n+1}))}\}$$ is equivalent to $$\widehat Q(U_{(L(X_{n+1}))}\mid X_{n+1})\le Y_{n+1}\le \widehat Q(U_{(H(X_{n+1}))}\mid X_{n+1}),$$ that is
\[Y_{n+1}\in \widehat C(X_{n+1}).\] 
Therefore, the desired coverage result follows.
\end{proof}

From the proof we see that the finite-sample marginal coverage guarantee follows the same fundamental philosophy as conformal prediction. Coverage is achieved through a rank-based argument that does not rely on distributional assumptions or model correctness. In contrast, achieving conditional coverage is substantially more challenging. Without additional assumptions, exact conditional coverage is generally impossible as shown in prior impossibility results for distribution-free prediction \citep{foygel2021limits, angelopoulos2024theoretical}. 

We next provide an asymptotic conditional validity result under standard assumptions.  This asymptotic analysis follows a theoretical trajectory similar to DCP \cite{chernozhukov2021dcp}, in the sense that conditional validity is driven by consistency of the estimated conditional CDF and the resulting asymptotic uniformity of the PIT representation.

\begin{assumption}[Conditional consistency of $\widehat F$]
\label{ass:cond_consistency}
For almost every $x$, 
$
\sup_{y\in\mathbb R}\big|\widehat F(y\mid x)-F(y\mid x)\big| = o_{P}(1)
$
(as the training sample size tends to infinity).
\end{assumption}

\begin{assumption}[Regularity of $F$]
\label{ass:regularity}
For almost every $x$, the conditional distribution of $Y\mid X=x$ is continuous, and $F(\cdot\mid x)$ is strictly increasing on the support of $Y\mid X=x$.
\end{assumption}

\begin{theorem}[Asymptotic conditional coverage]
\label{thm:asymp_conditional}
Suppose Assumptions~\ref{ass:cond_consistency}--\ref{ass:regularity} hold. Then  the prediction interval $\widehat C(X_{n+1})$ constructed by the CPI method satisfies
\[
\mathbb P\!\left(Y_{n+1}\in \widehat C(X_{n+1})\mid X_{n+1}\right)\ = \ 1-\alpha + o_{P}(1).
\]

\end{theorem}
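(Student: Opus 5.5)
The plan is to reduce the conditional coverage to a statement about the PIT cutoffs, and then to show that these cutoffs converge to the nominal levels $z$ and $z+1-\alpha$.

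\textbf{Step 1: rewrite the coverage event in PIT space.} Conditional on $X_{n+1}=x$, on the training data and on the calibration set, the cutoffs $u_{\mathrm{lo}},u_{\mathrm{hi}}$ are fixed. As in the proof of Theorem~\ref{thm:finite_123}, the event $\{Y_{n+1}\in\widehat C(x)\}$ coincides, up to boundary ties, with $\{u_{\mathrm{lo}}\le \widehat F(Y_{n+1}\mid x)\le u_{\mathrm{hi}}\}$. Therefore
\[
\mathbb P\big(Y_{n+1}\in\widehat C(x)\mid X_{n+1}=x\big)=F\big(\widehat Q(u_{\mathrm{hi}}\mid x)\mid x\big)-F\big(\widehat Q(u_{\mathrm{lo}}\mid x)\mid x\big),
\]
where we used continuity of $F(\cdot\mid x)$ from Assumption~\ref{ass:regularity}.

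\textbf{Step 2: compare $F(\widehat Q(u\mid x)\mid x)$ with $u$.} We first show
\[
\sup_{u\in[0,1]}\big|F(\widehat Q(u\mid x)\mid x)-u\big|=o_P(1).
\]
For each $u$, $\widehat F(\widehat Q(u\mid x)\mid x)$ lies within the jump size of $\widehat F$ at that point of $u$. Any jump of $\widehat F(\cdot\mid x)$ is at most $2\sup_y|\widehat F-F|$, because $F$ is continuous. Combining this with Assumption~\ref{ass:cond_consistency} gives the display. Consequently, the conditional coverage equals $u_{\mathrm{hi}}-u_{\mathrm{lo}}+o_P(1)$.

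\textbf{Step 3: show the cutoffs concentrate at the nominal levels.} The PIT values are $U_j=\widehat F(Y_j\mid X_j)=F(Y_j\mid X_j)+\Delta_j$, where $\Delta_j=(\widehat F-F)(Y_j\mid X_j)$. The uniform-in-$y$ bound gives $|\Delta_j|\le \epsilon(X_j):=\sup_y|\widehat F(y\mid X_j)-F(y\mid X_j)|$. Under Assumption~\ref{ass:regularity}, the variables $V_j=F(Y_j\mid X_j)$ are i.i.d.\ $\mathrm{Uniform}(0,1)$ (treating the calibration points as i.i.d.).

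\textbf{Step 3 is the main obstacle.} Pointwise consistency in $x$ has to be upgraded to control of order statistics. I would use dominated convergence: $\epsilon(X)\le 1$ and $\epsilon(X)\to 0$ in probability for a.e.\ $x$, so $\mathbb E[\min(\epsilon(X),1)]\to0$, and hence $\epsilon(X_j)\to0$ in probability for a typical calibration point. It follows that the fraction of $j$ with $\epsilon(X_j)>\delta$ is $o_P(1)$, by Markov's inequality applied to the empirical mean.

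The empirical CDF $\widehat G$ of the $U_j$ and the empirical CDF $G_n$ of the $V_j$ then satisfy
\[
G_n(t-\delta)-o_P(1)\le \widehat G(t)\le G_n(t+\delta)+o_P(1).
\]
By Glivenko--Cantelli, $\sup_t|\widehat G(t)-t|\le \delta+o_P(1)$ for every $\delta>0$, as $n$ and the training size both tend to infinity.

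The ranks satisfy $L(z)/n\to z$ and $H(z)/n\to z+1-\alpha$, with $z=z(x)$ fixed given $x$. Since the limiting uniform CDF is strictly increasing, quantile inversion then gives $u_{\mathrm{lo}}\to z$ and $u_{\mathrm{hi}}\to z+1-\alpha$ in probability. The one delicate case is an endpoint at $0$ or $1$, where the inversion still holds because the target values are the extreme order statistics.

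\textbf{Conclusion.} Combining Steps 1--3, the conditional coverage equals $(z+1-\alpha)-z+o_P(1)=1-\alpha+o_P(1)$.

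For the length-optimal version, the same argument applies because $z(x)$ is determined by the training data and $x$ alone. The convergence in Step 3 is uniform in the order-statistic level, since it rests on the uniform bound on $\widehat G$, so a random $z$ causes no difficulty.
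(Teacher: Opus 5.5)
Your proof is correct, but it takes a different route from the paper's.

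The paper stays in PIT space and averages over the calibration draw. It sets $\Delta(x)=\sup_u|\mathbb P(U\le u\mid X=x)-u|$ and bounds the gap between $\mathbb P(U_{(L(x))}\le U\le U_{(H(x))}\mid X=x)$ and $\mathbb E[U_{(H(x))}-U_{(L(x))}\mid X=x]$ by $2\Delta(x)$. It then uses $\sup_u|F_U(u)-u|\le\bar\Delta=\mathbb E[\Delta(X)]$ and the means of order statistics to reach the bound $2\Delta(x)+2\bar\Delta+O(1/n)$.

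You instead work on the response scale:
\begin{itemize}
\item you compute the coverage exactly as $F(\widehat Q(u_{\mathrm{hi}}\mid x)\mid x)-F(\widehat Q(u_{\mathrm{lo}}\mid x)\mid x)$;
\item you control $F(\widehat Q(u\mid x)\mid x)-u$ uniformly in $u$ via the jump-size bound, with error at most $3\sup_y|\widehat F(y\mid x)-F(y\mid x)|$;
\item you get convergence in probability of the calibration order statistics by coupling $U_j$ to the exact uniforms $V_j=F(Y_j\mid X_j)$ and applying Glivenko--Cantelli.
\end{itemize}

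Your route has three advantages. It gives a stronger, calibration-conditional conclusion, which implies the paper's calibration-averaged one by bounded convergence. It avoids the PIT/response-scale event identity that the paper's first display borrows from Theorem~\ref{thm:finite_123}; that identity holds only up to flat pieces of $\widehat F$. And it shows explicitly how the pointwise Assumption~\ref{ass:cond_consistency} is upgraded, via dominated convergence and Markov, to control of the calibration PIT distribution. The paper compresses that step into ``$\bar\Delta\to0$''.

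The paper's route is shorter. Its bound cleanly separates local miscalibration $\Delta(x)$ from average miscalibration $\bar\Delta$, with only an $O(1/n)$ order-statistic term. Yours carries a $\delta$-truncation and an $O_P(n^{-1/2})$ Glivenko--Cantelli term.

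One tightening for your inversion step: the deterministic inequality $|U_{(k)}-k/n|\le\sup_{t\in[0,1]}|\widehat G(t)-t|$, valid for every $1\le k\le n$, handles the endpoint case and a training-dependent $z(x)$ at once. The only genuinely special case is $H(z)=n+1$, which occurs when $z(x)>\alpha-1/(n+1)$. There you need the convention $u_{\mathrm{hi}}=1$.
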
 

\begin{proof}
For notational simplicity, write \((X,Y)\) for \((X_{n+1},Y_{n+1})\) and 
$U = \widehat F(Y\mid X)$.
If \(\widehat F = F\), then by Assumption~\ref{ass:regularity} we have \(U\mid X=x\sim \mathrm{Unif}(0,1)\) for almost every \(x\). Since \(\widehat F\) is estimated, define the conditional PIT error
\[
\Delta(x):=\sup_{u\in[0,1]}\big|\mathbb P(U\le u\mid X=x)-u\big|.
\]
Let \(U_{(L(x))}\) and \(U_{(H(x))}\) be the test-dependent PIT cutoffs computed from the calibration PIT values \(U_1,\dots,U_n\). By the definition of \(\Delta(x)\), we have
\[
\begin{split}
\Big|
\mathbb P\!\left(U_{(L(x))}\le U \le U_{(H(x))}\mid X=x\right)\quad\quad\quad\quad \\
\quad -\,
\mathbb E\!\left[U_{(H(x))}-U_{(L(x))}\mid X=x\right]
\Big|
\le 2\Delta(x).
\end{split}
\]

The calibration PIT values \(U_1,\dots,U_n\) are i.i.d. with marginal CDF \(F_U(u):=\mathbb P(U\le u)\). Moreover, 
\[
\sup_{u\in[0,1]}|F_U(u)-u| 
\le \mathbb E[\Delta(X)]=: \bar\Delta.
\]
Note also that \(\big|(H(x)-L(x))/(n+1)-(1-\alpha)\big|=O(1/n)\). Combining this with standard order-statistic/quantile theory yields
\[
\Big|
\mathbb E\big[U_{(H(x))}-U_{(L(x))}\mid X=x\big] - (1-\alpha)
\Big|
\le 2\bar\Delta + O\!\left(\frac{1}{n}\right).
\]

Putting the last two displays together gives
\[
\Big|
\mathbb P\!\left(Y\in \widehat C(x)\mid X=x\right) - (1-\alpha)
\Big|
\le
2\Delta(x) + 2\bar\Delta + O\!\left(\frac{1}{n}\right).
\]
Under Assumption~\ref{ass:cond_consistency} and Assumption~\ref{ass:regularity}, we have \(\Delta(x)=o_P(1)\) for almost every \(x\) as the training size increases, and consequently \(\bar\Delta=\mathbb E[\Delta(X)]\to 0\). Letting both the training size and \(n\) tend to infinity yields
\[
\mathbb P\!\left(Y\in \widehat C(x)\mid X=x\right)=1-\alpha+o_P(1)
\]
for almost every \(x\), which proves the claim.
\end{proof}

 This theorem establishes the asymptotic conditional validity of our procedure, assuming the consistency of the conditional CDF estimator. The coverage error is driven by how close the estimated PIT distribution is to the uniform distribution. Importantly, this asymptotic validity does not depend on the accuracy of $z_{\theta}$ (or $\hat{z}$), which affects efficiency rather than coverage. If $z_{\theta}$ (or $\hat{z}$) accurately estimates $z^{\star}$, the resulting prediction interval will be close to the oracle optimal interval $\big[F^{-1}(z^{\star}(x)\mid x), F^{-1}(z^{\star}(x)+1-\alpha \mid x)\big]$. 
 
 Finally, we emphasize that verifying these limits is not a prerequisite for the practical validity of our method; rather, the theorem serves as a theoretical sanity check confirming that, under ideal conditions (i.e., consistent CDF estimation and large calibration samples), the method recovers conditional validity. The primary strength of CPI lies in its finite-sample marginal guarantees and its ability to produce shorter, better-adapted intervals in practice.

\section{Simulations}
\label{sec:simulation}
In this section, we provide a comprehensive empirical evaluation of the proposed CPI framework through a series of simulation studies. First, we assess the performance of CPI against four existing conformal prediction methods including DCP, CQR, and both standard residual (referred to as Residual) and variance-rescaled residual (referred to as Rescaled) approaches under complex data-generating processes characterized by multimodality and covariate-dependent noise. This comparison highlights the advantages of our method in terms of conditional coverage and interval efficiency. Second, we move beyond the i.i.d. setting to evaluate the robustness of the two CDF-based methods (CPI and DCP) under distribution shift, demonstrating how our adaptive calibration mechanism maintains efficiency even when the underlying CDF estimates are systematically biased.


\subsection{Conditional Coverage and Efficiency Analysis}
\label{subsec:sim_dgp}
\textbf{Setup.} We consider a covariate vector $x \in \mathbb{R}^{5}$ whose coordinates are  independent and distributed as
\begin{align*}
x_{1} &\sim \mathrm{Uniform}(0,1), \\
x_{2} &\sim \mathcal{N}(0,1) \ \text{truncated to } [-3,3], \\
x_{3} &\sim \mathrm{Beta}(0.5,\,0.5), \\
x_{4} &\sim \mathrm{Bernoulli}(0.5), \\
x_{5} &\sim \mathrm{Poisson}(2) \ \text{truncated at } 5.
\end{align*}
The response is generated according to
\begin{equation*}
    y \;=\; x_{1}^{2} + x_{2} x_{3} + x_{3} x_{4} + x_{5} + \varepsilon,
\end{equation*}
where the noise term is conditionally heteroscedastic in  $x_{1}$:
$$\varepsilon \;=\; g(x_{1}) \sum_{k=1}^{3} \pi_{k}(x_{1})\,\mathcal{D}_{k}$$  with scale
$g(x_{1}) = 0.05 + 1.5\,(x_{1} - 0.5)^{2}$, and independent mixture components $\mathcal{D}_{1}\sim t_{3}$, $\mathcal{D}_{2}\sim\mathcal{N}(0,1)$,  $\mathcal{D}_{3}\sim\mathrm{Exp}(1)-1$. The mixture weights are defined as
\begin{equation*}
    \pi_{1}(x_{1}) = 4(x_{1}-0.5)^{2}\,\mathbbm{1}\{x_{1}<0.5\},
    \quad
    \pi_{3}(x_{1}) = 4(x_{1}-0.5)^{2}\,\mathbbm{1}\{x_{1}\geq 0.5\},
    \quad
    \pi_{2}(x_{1}) = 1 - \pi_{1}(x_{1}) - \pi_{3}(x_{1}).
\end{equation*}
This construction yields $\mathbb{E}[\varepsilon \mid x_{1}] = 0$, while the conditional scale, skewness, and tail behavior of $\varepsilon$ all vary smoothly with $x_{1}$, transitioning from heavy-tailed near $x_{1}=0$ to right-skewed near $x_{1}=1$.

During pretraining, $n_{\mathrm{tr}}=2000$ samples are used to train estimators. Both CPI and DCP are based on the same NN-CDE estimator \citep{hu2023nncde} and utilize Approach 1 to select PIT starting point $z$. Details of the estimator, as well as those of the other approaches, are provided in the Appendix. 

The calibration set size is set to $n_{\mathrm{cal}}=300$ with a target error rate of $\alpha=0.1$. To assess conditional coverage, we fix $200$ $x_1$ covariate values sampled uniformly from $(0,1)$ and generate their corresponding test samples. All results are averaged over $1000$ independent replications.

\textbf{Marginal coverage and interval length.} Table~\ref{tab:sim_2b_cov_width_vertical} summarizes the marginal coverage and average interval width across all test covariates. As expected, all methods achieve coverage levels near the nominal target. Notably, CPI produces the shortest prediction intervals, demonstrating superior efficiency while maintaining valid coverage.

\begin{table}[t]
\centering
\caption{Marginal coverage and interval width }
\label{tab:sim_2b_cov_width_vertical}
\small
\begin{tabular}{lcc}
\toprule
Method & Coverage & Width \\
\midrule
CPI  & 0.906 (0.065) & 0.601 (0.334) \\
DCP      & 0.910 (0.065) & 0.616 (0.363) \\
Residual            & 0.910 (0.140) & 0.784 (0.000) \\
Rescaled            & 0.910 (0.132) & 0.792 (0.119) \\
CQR                 & 0.899 (0.148) & 0.915 (0.227) \\
\bottomrule
\end{tabular}
\end{table}

\textbf{Conditional coverage.} 
Figure~\ref{fig:sim_2b_binned_cov} shows the conditional coverage across different values of $x_1$, together with the nominal target level of $0.9$ (dashed) and an under-coverage warning threshold of $0.8$ (dotted). For visualization, the smoothed curves are obtained using Gaussian kernel (Nadaraya--Watson) smoothing, with bandwidth chosen by Silverman's rule. Although all methods attain the target marginal coverage, their conditional profiles differ sharply. The non--CDF-based methods (Residual, Rescaled, and CQR) are highly unstable: they over-cover in the interior, where the noise variance is small, and drop well below the $0.8$ warning threshold at both boundaries, where the noise variance is large (the variation of noise is illustrated in Figure~\ref{fig:sim_2b_noise}). In contrast, the CDF-based methods (CPI  and DCP) track the nominal level closely across the entire range of $x_1$, with coverage remaining above the warning threshold at every $x_1$.

\textbf{Conditional interval width.} Next, we examine efficiency, measured 
by the width of the prediction intervals. 
Figure~\ref{fig:sim_2b_binned_width}   illustrates how interval width 
varies with $x_1$, using the same smoothing procedure as above. While the 
Residual, Rescaled, and CQR methods fail to adapt to the noise structure shown in Figure~\ref{fig:sim_2b_noise} and Figure~\ref{fig:sim_2b_condvar}, 
producing either constant or counter-intuitive width patterns, both 
CDF-based methods (CPI and DCP) exhibit U-shaped curves that closely track 
local variability. While CPI and DCP perform similarly over the 
central range of $x_1$, CPI becomes more efficient near the boundaries, especially as $x_1 \to 1$, 
producing intervals approximately 5.8\% shorter in the $(0.8, 1]$ bin. This 
suggests that in regimes with complex noise---characterized by heavy tails 
or skewness---CPI achieves the target coverage with tighter intervals than DCP. As we demonstrate in the next simulation, this 
efficiency gap becomes even more critical under distribution shift.

\begin{figure}[H]
    \centering
    \begin{subfigure}[h]{0.48\linewidth}
        \centering
        \includegraphics[width=\linewidth]{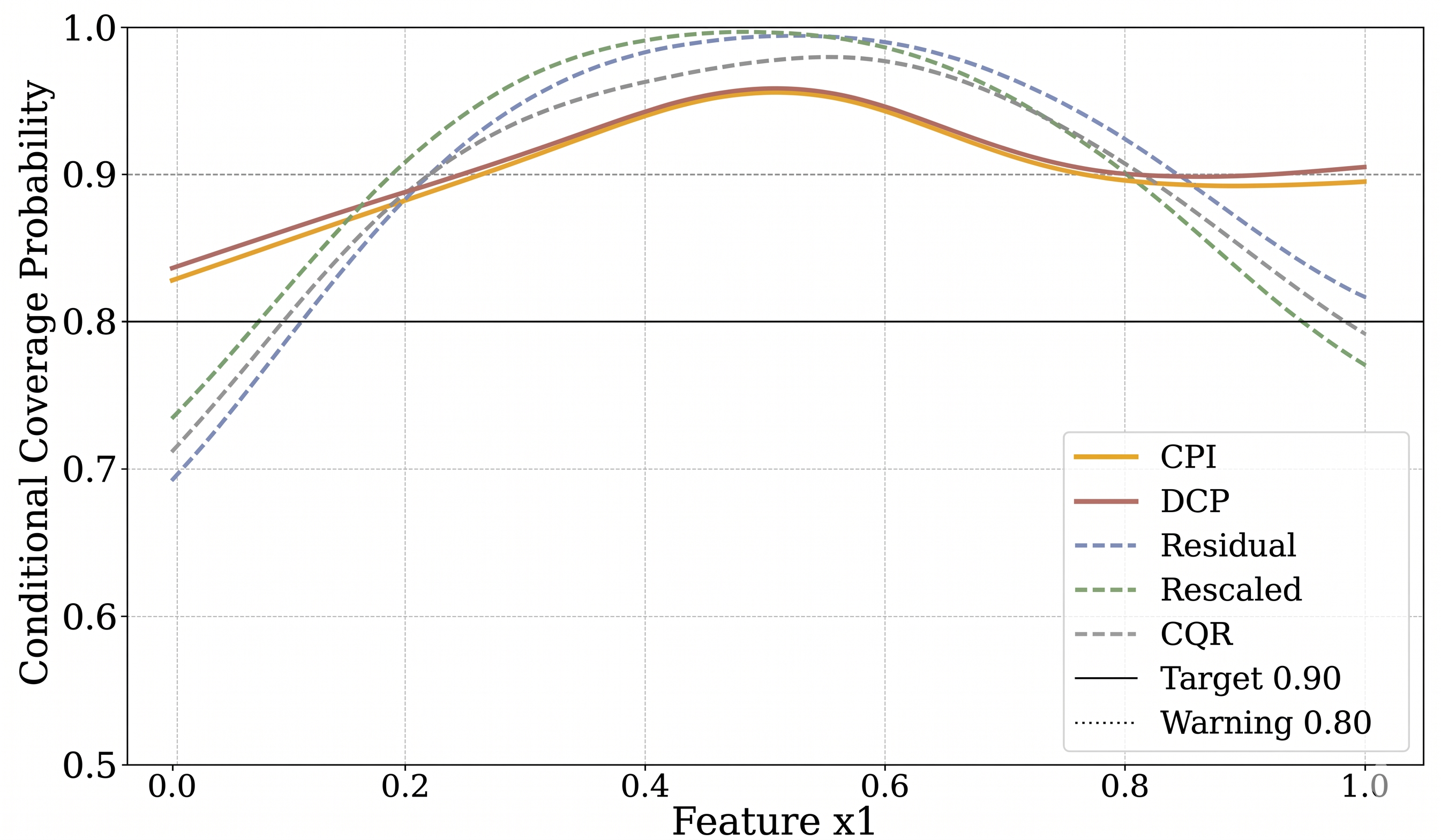}
        \caption{Smoothed conditional coverage. Dashed line: nominal 
        level $0.9$; dotted line: under-coverage threshold $0.8$.}
        \label{fig:sim_2b_binned_cov}
    \end{subfigure}
    \hfill
    \begin{subfigure}[h]{0.48\linewidth}
        \centering
        \includegraphics[width=\linewidth]{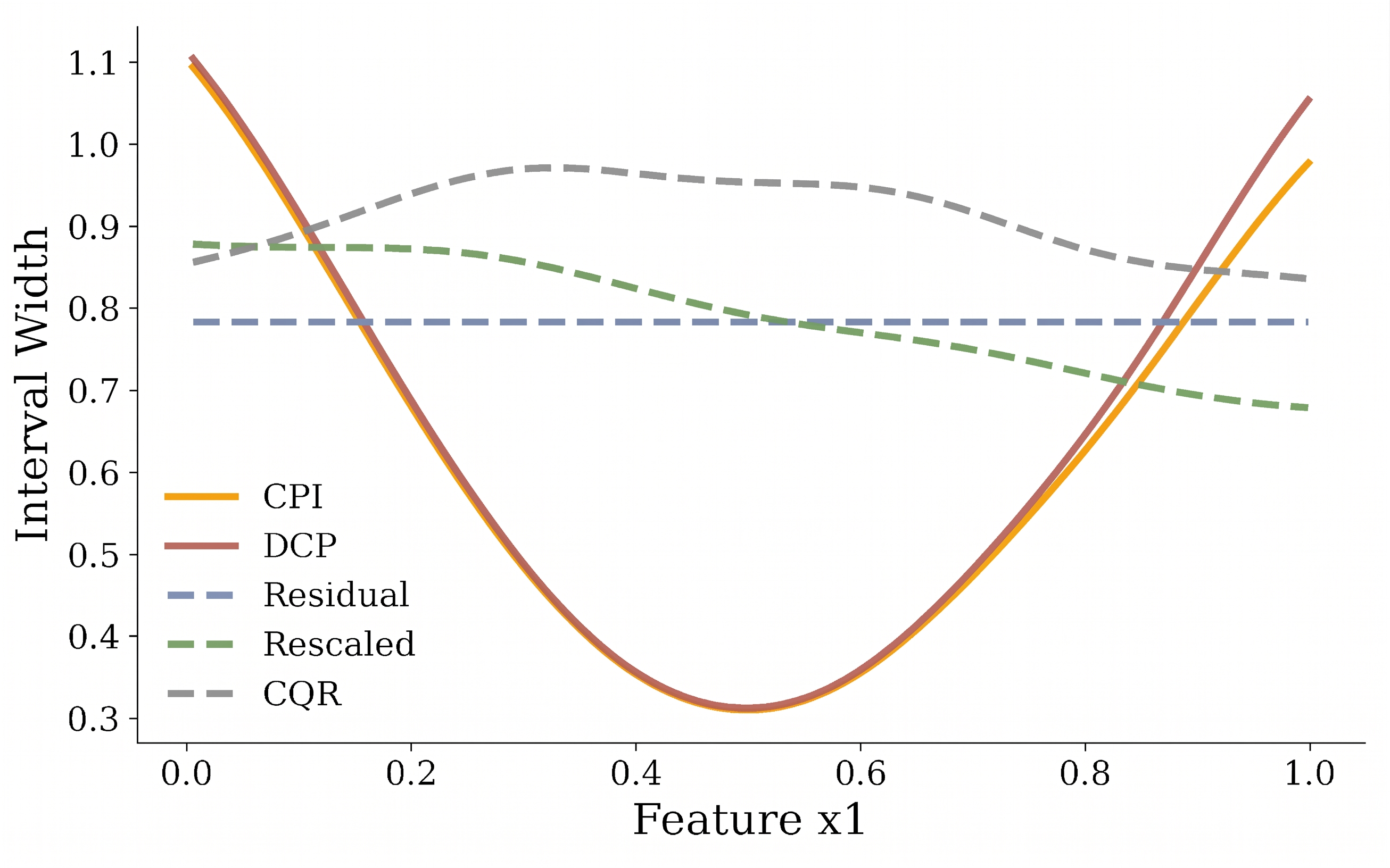}
        \caption{Smoothed conditional interval width.}
        \label{fig:sim_2b_binned_width}
    \end{subfigure}

    \vspace{0.3cm}

    \begin{subfigure}[h]{0.48\linewidth}
        \centering
        \includegraphics[width=\linewidth]{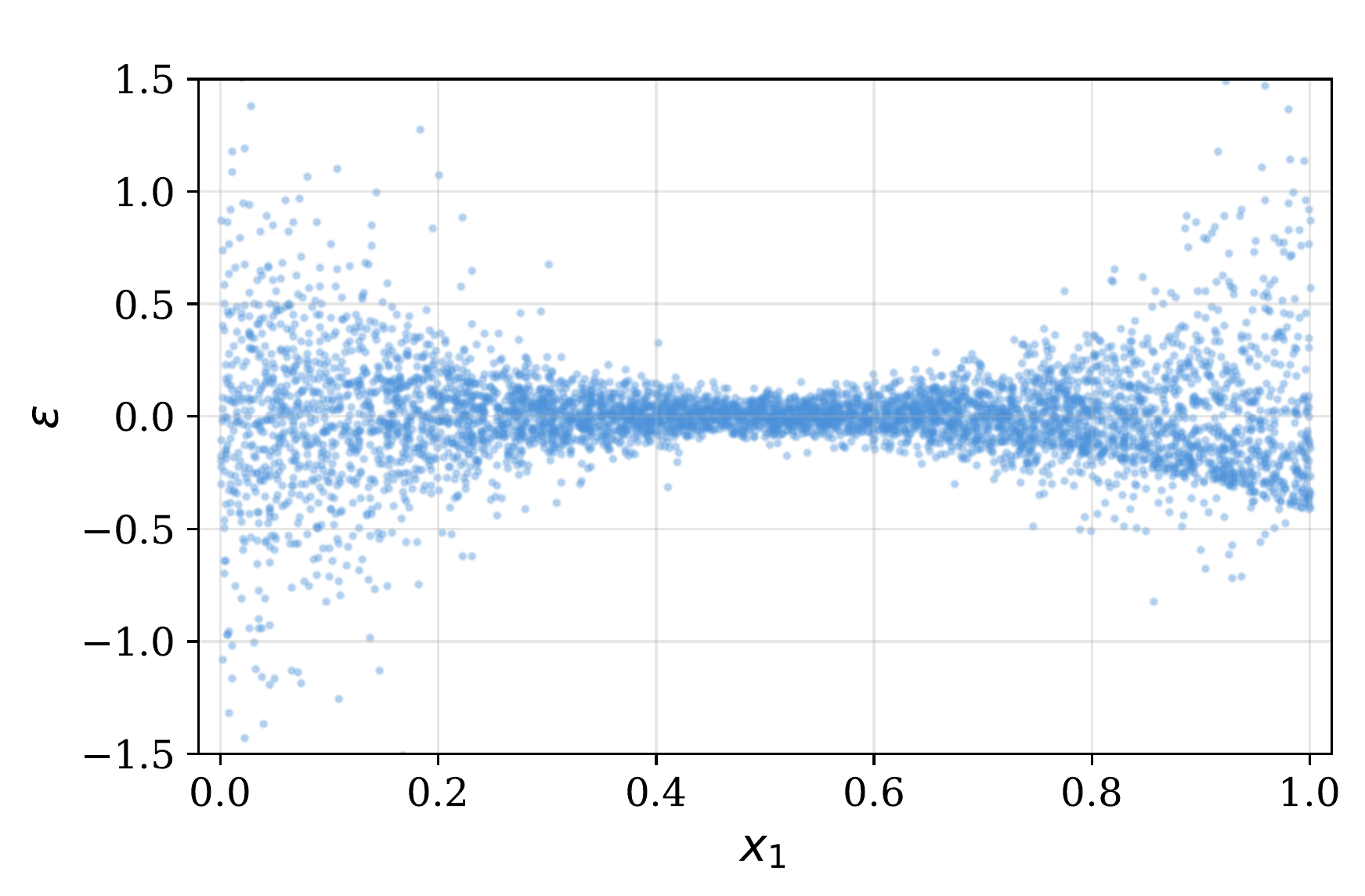}
        \caption{Scatter plot of Noise $\epsilon$ against $x_1$, showing heteroscedastic variability.}
        \label{fig:sim_2b_noise}
    \end{subfigure}
    \hfill
    \begin{subfigure}[h]{0.48\linewidth}
        \centering
        \includegraphics[width=\linewidth]{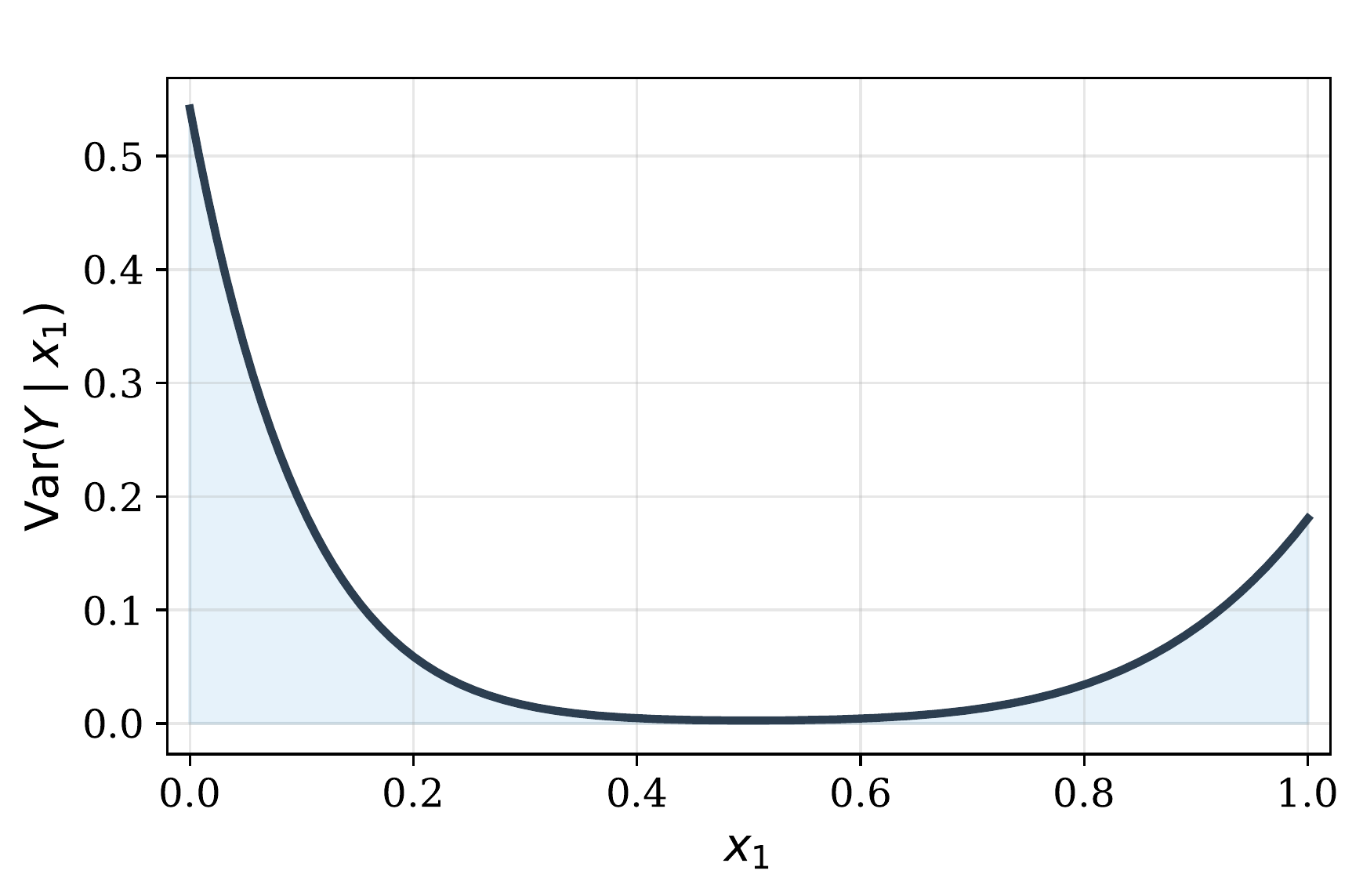}
        \caption{Theoretical $\mathrm{Var}(Y \mid x_1)$.  }
        \label{fig:sim_2b_condvar}
    \end{subfigure}
    \caption{conditional performance and noise structure.}
    \label{fig:sim_2b_cov_width}
\end{figure}



\subsection{Comparison between CPI and DCP  under 
Distribution Shift}
\label{sec:sim_shift_cpi_dcp} 
Previous simulations have assumed that calibration and test data originate from the same distribution as the training data. In practice, however, this assumption is often violated, for instance, by evolving noise structures over time. A robust conformal procedure must therefore retain valid coverage while maintaining stable efficiency.   In this section, we probe this robustness through a controlled distribution-shift experiment, demonstrating how our adaptive calibration mechanism allows CPI to achieve significantly tighter, more stable intervals than DCP when the estimated CDF is systematically shifted.

\paragraph{Setup.}
We reuse the data-generating process in Section~\ref{subsec:sim_dgp} and introduce a location shift on 
the response,
\begin{equation}
    Y' \;=\; f(X) \;+\; g(X_1)\,\varepsilon \;+\; \delta,
    \qquad
    \delta \in \{0,\ 0.05,\ 0.1\},
    \label{eq:shift}
\end{equation}
applied \emph{only} to the calibration and test samples; the CDF estimator (NN-CDE 
model) is trained on the unshifted distribution 
($\delta = 0$) and then held fixed. This design isolates the effect 
of the shift on the conformal calibration step, so that any 
difference between CPI and DCP can be attributed to 
their calibration mechanisms rather than to the underlying 
conditional density estimator. 

\begin{table}[t]
    \centering
    \small
    \caption{Conditional and marginal coverage and  interval 
    length under the location shift. Standard deviations across replications are in 
    parentheses. ``Red.'' denotes the relative reduction in mean 
    interval length of CPI over DCP, i.e., 
    $1 - \mathrm{len}(\text{CPI}) / \mathrm{len}(\text{DCP})$.}
    \label{tab:shift_0c1}
    \begin{tabular}{cc cc ccc}
        \toprule
        & & \multicolumn{2}{c}{Coverage} 
          & \multicolumn{3}{c}{Interval length} \\
        \cmidrule(lr){3-4} \cmidrule(lr){5-7}
        $\delta$ & $x_1$ bin & CPI & DCP 
            & CPI & DCP & Red. \\
        \midrule
        0.00 & $(0.0,0.2]$ & 0.837 & 0.845 & 1.033 (0.259) & 1.043 (0.260) & 1.0\% \\
             & $(0.2,0.4]$ & 0.906 & 0.909 & 0.464 (0.173) & 0.468 (0.176) & 0.9\% \\
             & $(0.4,0.6]$ & 0.973 & 0.976 & 0.282 (0.048) & 0.283 (0.049) & 0.6\% \\
             & $(0.6,0.8]$ & 0.906 & 0.908 & 0.442 (0.105) & 0.449 (0.108) & 1.5\% \\
             & $(0.8,1.0]$ & 0.894 & 0.902 & 0.894 (0.270) & 0.949 (0.369) & 5.8\% \\
             & \textbf{Marginal} & 0.906 & 0.910 & 0.601 (0.334) & 0.616 (0.363) & 2.5\% \\
        \midrule
        0.05 & $(0.0,0.2]$ & 0.866 & 0.880 & 1.116 (0.263) & 1.157 (0.256) & 3.6\% \\
             & $(0.2,0.4]$ & 0.914 & 0.916 & 0.497 (0.181) & 0.533 (0.181) & 6.6\% \\
             & $(0.4,0.6]$ & 0.970 & 0.959 & 0.297 (0.047) & 0.328 (0.047) & 9.4\% \\
             & $(0.6,0.8]$ & 0.902 & 0.904 & 0.425 (0.108) & 0.454 (0.132) & 6.3\% \\
             & $(0.8,1.0]$ & 0.871 & 0.858 & 0.802 (0.216) & 1.315 (1.059) & 39.0\% \\
             & \textbf{Marginal} & 0.906 & 0.904 & 0.603 (0.329) & 0.733 (0.630) & 17.7\% \\
        \midrule
        0.10 & $(0.0,0.2]$ & 0.870 & 0.908 & 1.190 (0.303) & 1.505 (0.692) & 20.9\% \\
             & $(0.2,0.4]$ & 0.914 & 0.908 & 0.513 (0.200) & 0.617 (0.211) & 16.8\% \\
             & $(0.4,0.6]$ & 0.948 & 0.914 & 0.298 (0.051) & 0.382 (0.053) & 22.0\% \\
             & $(0.6,0.8]$ & 0.916 & 0.903 & 0.457 (0.134) & 1.935 (6.398) & 76.4\% \\
             & $(0.8,1.0]$ & 0.903 & 0.877 & 0.919 (0.264) & 25.526 (24.892) & 96.4\% \\
             & \textbf{Marginal} & 0.912 & 0.902 & 0.649 (0.373) & 5.965 (15.040) & 89.1\% \\
        \bottomrule
    \end{tabular}
\end{table}

\paragraph{Results.}
Table~\ref{tab:shift_0c1} reports coverage and interval length across different shift levels ($\delta\in\{0, 0.05, 0.1\}$).
Across all regimes, both methods successfully maintain near-nominal marginal coverage, illustrating the effectiveness of the conformal framework in providing finite-sample valid coverage through model-free calibration, even when the underlying density estimator is degraded by distribution shift. However, a significant performance gap emerges regarding efficiency. Under a mild shift ($\delta=0.05$), CPI achieves a 17.7\% reduction in marginal mean width, with a 39.0\% reduction in the boundary bin $x_1 \in (0.8, 1.0]$. This efficiency gap widens dramatically under a larger shift ($\delta=0.1$). In this regime, DCP becomes highly unstable; its marginal mean width reaches 5.965—nearly ten times the 0.649 value achieved by CPI. This performance degradation is even more pronounced in the bin $x_1 \in (0.8, 1.0]$,  where DCP  produces a mean width of 25.526 with a massive standard deviation of 24.892, indicating extreme sensitivity to the shift. In  contrast, CPI remains robust, maintaining a mean width of only 0.919 with a standard deviation of 0.264.

\section{Real Data}
\label{sec:real_data}

In this section, we evaluate the methods on six real datasets: Airfoil~\citep{airfoil}, Computer~\citep{computer}, Abalone~\citep{abalone}, Concrete~\citep{concrete}, AutoMPG~\citep{autompg}, and Crime~\citep{crime}. Each dataset is split into three disjoint subsets: $45\%$ for training, $35\%$ for calibration, and $20\%$ for testing, and all results are averaged over $10$ different random partitions.
We compare CPI against the same four approaches as in the simulations, namely DCP, CQR, Residual, and Rescaled. All methods are calibrated at the nominal level $1-\alpha = 0.90$. 

\paragraph{Marginal coverage and interval width.}
Tables~\ref{tab:real-data-coverage} and~\ref{tab:real-data-width} report the marginal coverage and average interval width on the six datasets. All methods achieve coverage close to the nominal 90\% level on every dataset, confirming the validity of the conformal calibration. Under this comparable coverage, CPI consistently produces the shortest prediction intervals among all methods. 

\begin{table}[ht]
\centering
\caption{Marginal coverage on six real-world datasets. All methods achieve coverage close to the nominal 90\% level.}
\label{tab:real-data-coverage}
\begin{tabular}{lccccc}
\toprule
Dataset & Residual & Rescaled & CQR & CPI & DCP \\
\midrule
Airfoil   & 0.899 & 0.913 & 0.910 & 0.902 & 0.907 \\
Computer  & 0.905 & 0.903 & 0.897 & 0.897 & 0.901 \\
Abalone   & 0.895 & 0.896 & 0.898 & 0.892 & 0.893 \\
Concrete  & 0.884 & 0.912 & 0.897 & 0.900 & 0.910 \\
AutoMPG   & 0.915 & 0.888 & 0.894 & 0.904 & 0.926 \\
Crime     & 0.904 & 0.901 & 0.907 & 0.908 & 0.912 \\
\bottomrule
\end{tabular}
\end{table}

\begin{table}[ht]
\centering
\caption{Average interval width on six real-world datasets. Bold indicates the best (shortest) width for each dataset. CPI achieves the narrowest intervals on all six datasets.}
\label{tab:real-data-width}
\begin{tabular}{lccccc}
\toprule
Dataset & Residual & Rescaled & CQR & CPI & DCP \\
\midrule
Airfoil   & 15.34 & 12.71 & 12.42 & \textbf{11.03} & 12.31 \\
Computer  & 21.75 & 25.63 & 10.62 & \textbf{8.68} & 8.71 \\
Abalone   & 8.61 & 8.29 & 8.52 & \textbf{7.96} & 8.12 \\
Concrete  & 34.50 & 34.92 & 31.35 & \textbf{20.82} & 22.57 \\
AutoMPG   & 19.77 & 22.37 & 20.21 & \textbf{18.38} & 20.60 \\
Crime     & 0.72 & 0.94 & 0.81 & \textbf{0.70} & 1.61 \\
\bottomrule
\end{tabular}
\end{table}

\paragraph{Conditional coverage.}To assess conditional coverage, we project the test features onto the first principal component (PC1), computed from the standardized training data, and partition the test set into four groups according to the quartiles of the projected values. Within each group, we compute the group-specific coverage and mean interval width for each method. CPI consistently achieves the most stable coverage close to the nominal $90\%$ level, while yielding the shortest intervals within different subgroups across various datasets. The other four approaches either suffer from significant undercoverage in certain groups or exhibit excessively large interval widths.
Figure~\ref{fig:pc1-abalone} shows the results on the Abalone dataset. Across the four PC1 groups, CPI and DCP achieve coverage close to the nominal 90\% level. In contrast, Rescaled slightly undercovers in Group 2, while Residual and CQR fall below the target level in Group 3. Moreover, CPI consistently achieves the shortest intervals across all four groups.
These results demonstrate that CPI not only yields shorter intervals marginally but also maintains well-calibrated conditional coverage across different regions of the feature space. Analogous plots for the remaining five datasets are provided in Appendix~\ref{sec:appendix_condcov}, where the same pattern holds consistently.

\begin{figure}[H]
\centering
\includegraphics[width=0.9\textwidth]{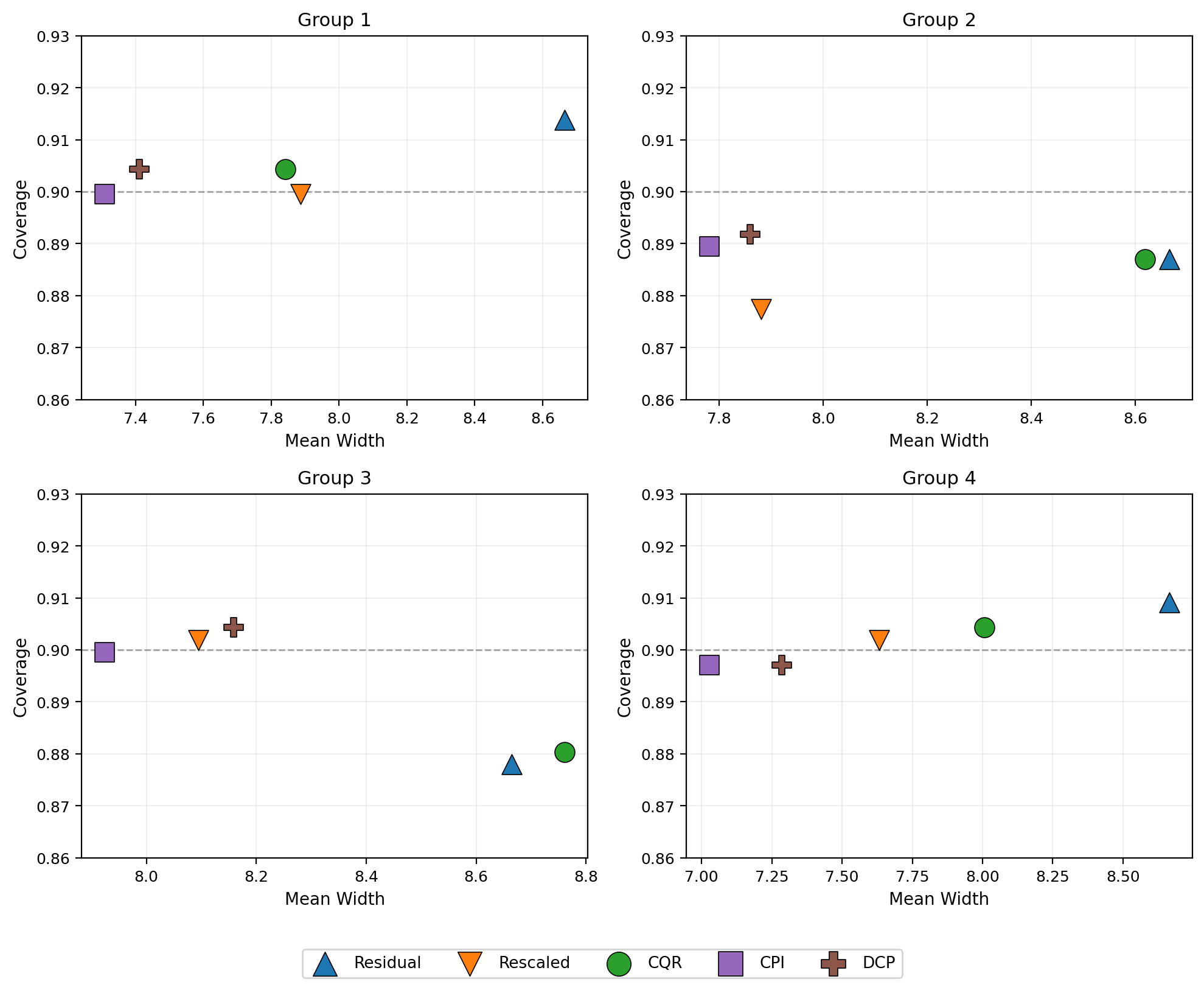}
\caption{Coverage vs interval width on the Abalone dataset, stratified by PC1 quartile groups.}
\label{fig:pc1-abalone}
\end{figure}

\section{Discussion}
In this work, we introduced a new framework for constructing prediction intervals based on
conditional CDF estimation combined with a finite-sample calibration step. Our method follows the
spirit of conformal prediction in using held-out data to guarantee marginal coverage, while departing from standard approaches by calibrating percentile endpoints directly rather than relying on a single conformity score. By leveraging the estimated CDF, our approach improves conditional adaptivity relative to non-CDF-based methods. At the same time, it advances existing CDF-based conformal inference by making the calibration step robust to asymmetric errors in the estimated CDF—errors that naturally arise from finite-sample bias, model misspecification, or distribution shift.

The method is simple to implement and can be paired with any conditional CDF estimator. Empirically, CPI yields more stable conditional coverage across the feature space and produces consistently shorter prediction intervals. These gains arise from better alignment with the underlying conditional distribution, rather than a trade-off between coverage and interval length.

Overall, our results suggest that directly calibrating percentile endpoints provides a flexible and effective alternative to score-based conformal methods, particularly in settings with heteroscedasticity, asymmetry, or distributional complexity.

\bibliography{example_paper}
\bibliographystyle{icml2026}  


\appendix
\renewcommand{\thesection}{\Alph{section}}
\setcounter{section}{0}

\begin{center}
  {\Large\bfseries Appendix}
\end{center}
\addcontentsline{toc}{section}{Appendix}
\vspace{0.5em}

\section{Implementation Details}
\label{sec:impl_details}

We provide the full implementation details for reproducibility.

\paragraph{NN-CDE (Hazard Network).}
The conditional density estimator is a two-hidden-layer MLP with $64$ hidden units per layer and ReLU activations. All features are standardized to zero mean and unit variance. Training uses Adam~\citep{kingma2015adam} with learning rate $5 \times 10^{-4}$, batch size $256$, up to $500$ epochs with early stopping (patience $40$) on a $20\%$ validation split.

\paragraph{Optimal $z^*(\mathbf{x})$ computation.}
The shortest-width starting quantile $z^*(\mathbf{x})$ is found by grid search over $41$ equally spaced values in $(10^{-6},\; \alpha - 10^{-6})$. Quantile inversion uses linear interpolation on the cumulative hazard grid.

\paragraph{Baseline methods (simulation).}
All baseline neural networks are trained with Adam (learning rate $10^{-3}$, batch size $256$, up to $180$ epochs, early stopping with patience $10$, $30\%$ validation split).

\begin{itemize}
    \item \textbf{Residual}: a two-hidden-layer MLP (\texttt{SimpleNN}) with $32$ hidden units per layer and ReLU activations, trained with MSE loss. 
    \item \textbf{Rescaled}: a two-hidden-layer shared backbone (\texttt{MeanVarianceNN}, $32$ hidden units) with separate linear heads for the mean $\hat{\mu}(\mathbf{x})$ and log-variance $\log\hat{\sigma}^2(\mathbf{x})$, trained with Gaussian negative log-likelihood loss. The conformal score is the standardized residual $|y - \hat{\mu}(\mathbf{x})| / \hat{\sigma}(\mathbf{x})$.
    \item \textbf{CQR}: a two-hidden-layer MLP (\texttt{QuantileNN}) with $16$ hidden units, outputting two quantiles simultaneously, trained with pinball loss at levels $\alpha/2$ and $1 - \alpha/2$.
\end{itemize}

\paragraph{Real data experiments.}
Each dataset is split $45\%/35\%/20\%$ (train/calibration/test), and all results are averaged $10$ random partitions. Our implementation follows the simulation setup with one exception: Rescaled and CQR employ a two-hidden-layer MLP with $48$ hidden units. Training is conducted using Adam with a learning rate of $10^{-3}$ and a batch size of $128$, for up to $200$ epochs with an early stopping patience of $20$. The NN-CDE architecture and all calibration procedures remain identical.

\section{Conditional Coverage on Real Datasets}
\label{sec:appendix_condcov}

This appendix provides the per-group coverage vs.\ mean interval width plots for the five real datasets not shown in the main text. For each dataset, test points are projected onto the first principal component (PC1) of the standardized training features and partitioned into four groups by PC1 quartiles. Each scatter plot shows the group-specific coverage (vertical axis) and mean interval width (horizontal axis) for the five conformal methods, averaged over $10$ random partitions. The dashed line indicates the nominal $90\%$ coverage level.

Across all datasets, CPI consistently achieves coverage close to the nominal level. The sole exception is the Computer dataset, where all approaches exhibit undercoverage in Group 4. Notably, CPI maintains the narrowest prediction intervals in each PC1 group, confirming that its marginal interval width advantage extends to the conditional setting. 

\begin{figure}[H]
\centering
\includegraphics[width=\textwidth]{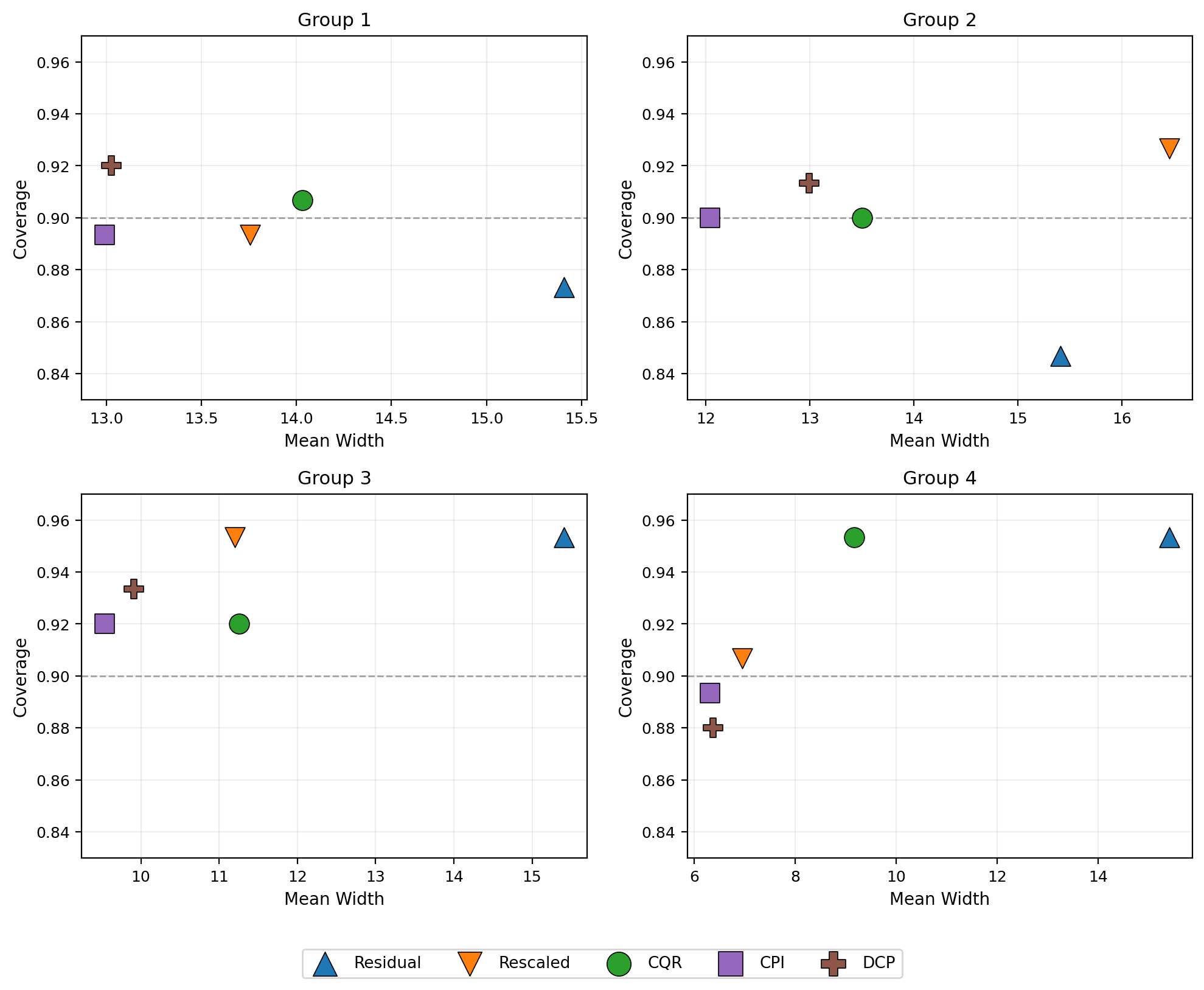}
\caption{Coverage vs.\ mean interval width on the Airfoil dataset, stratified by PC1 quartile groups.}
\label{fig:pc1-airfoil}
\end{figure}

\begin{figure}[H]
\centering
\includegraphics[width=\textwidth]{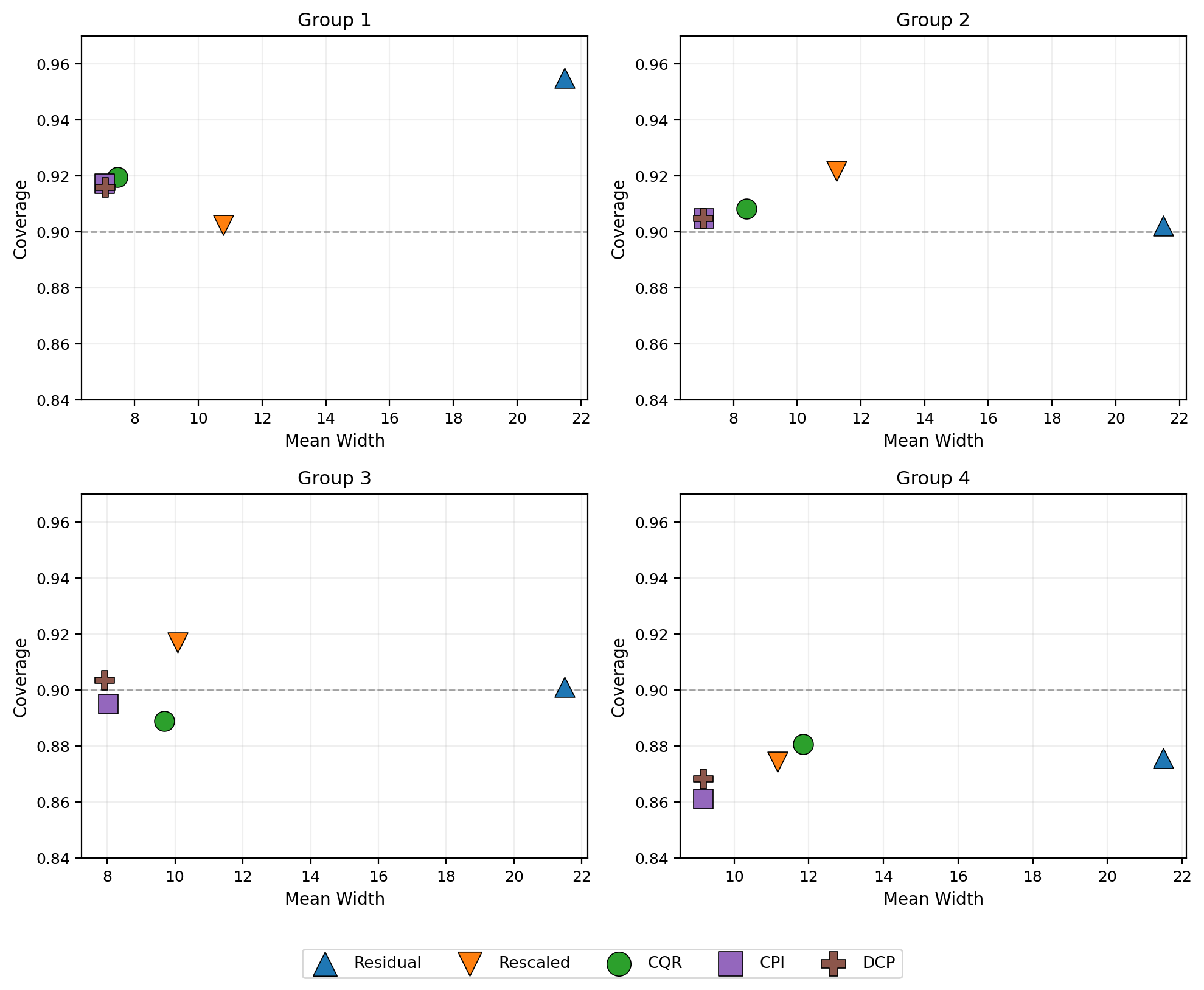}
\caption{Coverage vs.\ mean interval width on the Computer dataset, stratified by PC1 quartile groups.}
\label{fig:pc1-computer}
\end{figure}

\begin{figure}[H]
\centering
\includegraphics[width=\textwidth]{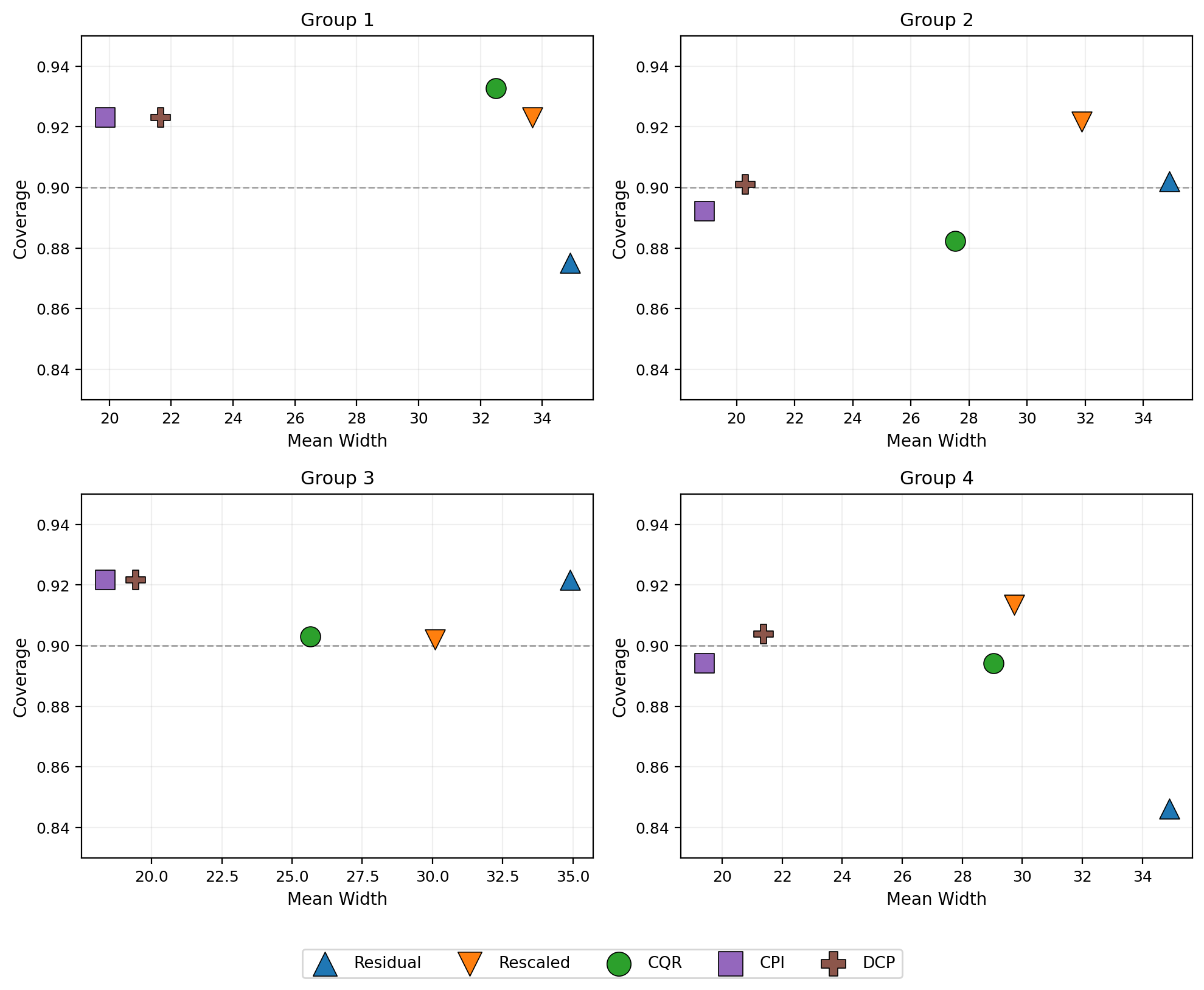}
\caption{Coverage vs.\ mean interval width on the Concrete dataset, stratified by PC1 quartile groups.}
\label{fig:pc1-concrete}
\end{figure}

\begin{figure}[H]
\centering
\includegraphics[width=\textwidth]{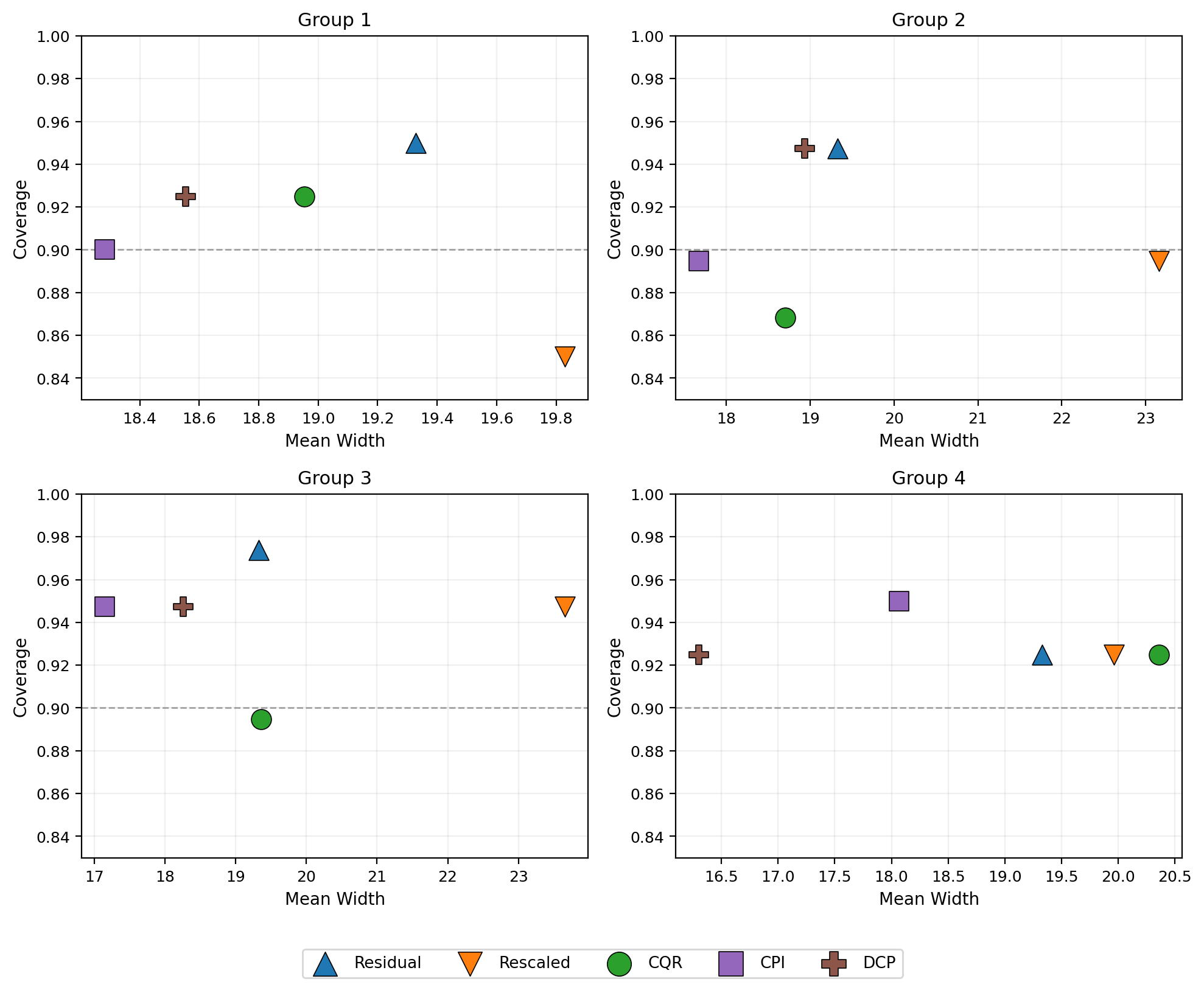}
\caption{Coverage vs.\ mean interval width on the AutoMPG dataset, stratified by PC1 quartile groups.}
\label{fig:pc1-autompg}
\end{figure}

\begin{figure}[H]
\centering
\includegraphics[width=\textwidth]{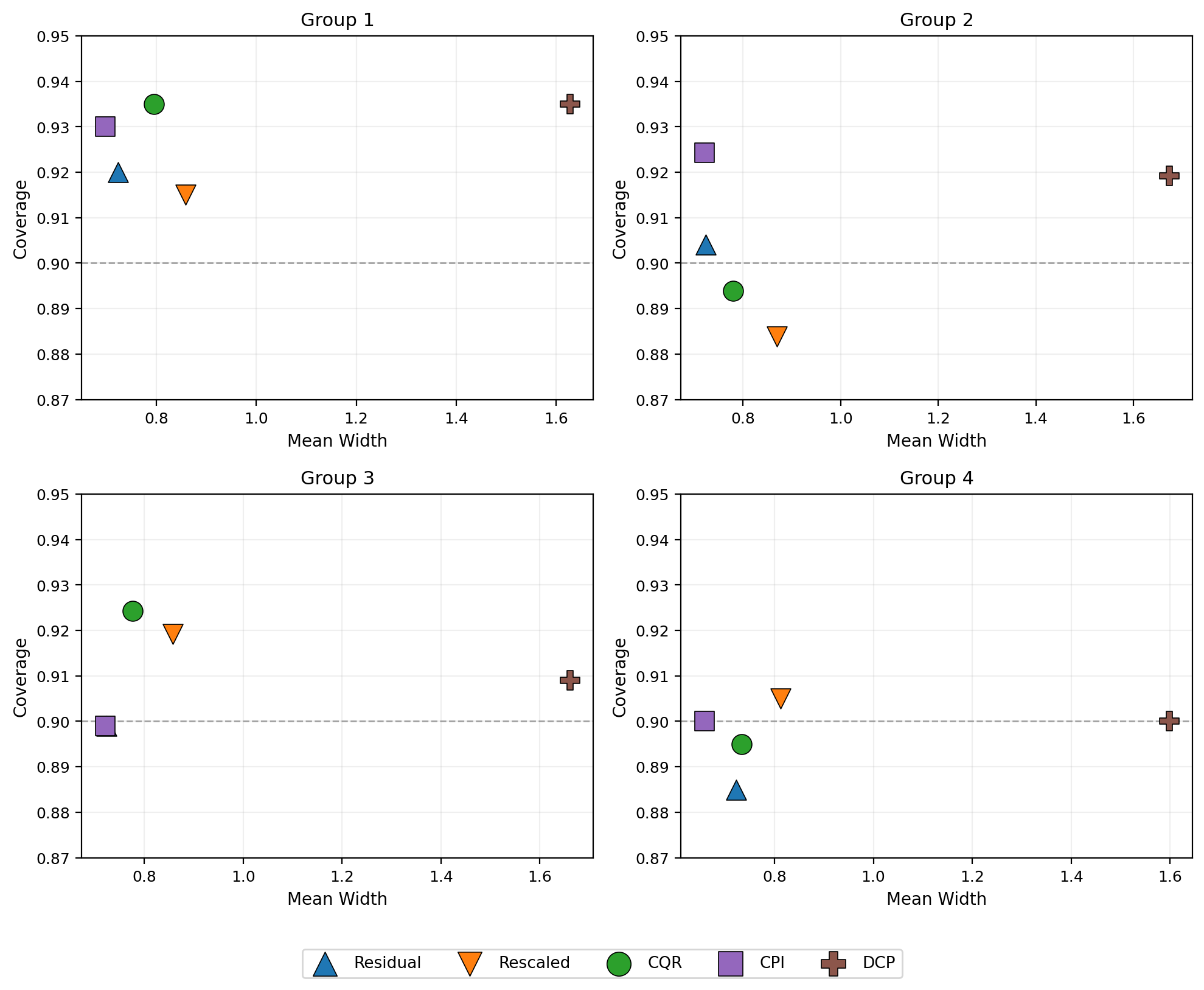}
\caption{Coverage vs.\ mean interval width on the Crime dataset, stratified by PC1 quartile groups.}
\label{fig:pc1-crime}
\end{figure}

\end{document}